\newtheorem{definition}{Definition}
\newtheorem{assumption}{Assumption}
\newtheorem{proposition}{Proposition}
\newcommand{\ind}{\perp\!\!\!\!\perp} 
\newcommand{\cD}{\mathcal{D}} 
\newcommand{\cT}{\mathcal{T}}
\def\eqref#1{equation~\ref{#1}}
\DeclareMathAlphabet{\mathsfit}{\encodingdefault}{\sfdefault}{m}{sl}
\SetMathAlphabet{\mathsfit}{bold}{\encodingdefault}{\sfdefault}{bx}{n}
\def\sP{{\mathbb{P}}}
\def\sR{{\mathbb{R}}}
\newcommand{\E}{\mathbb{E}}
\title{\centering Training and Evaluating Causal Forecasting Models for Time-Series}
\author{Thomas Crasson \& Yacine Nabet \thanks{Joint first authors. Work done in part while TC was at UBC.} \\
Wiremind\\
\texttt{thomas.crasson@polytechnique.edu, ynabet@wiremind.io}\\
\And
Mathias L\'ecuyer \\
University of British Columbia \\
\texttt{mathias.lecuyer@ubc.ca}\\
}
\newcommand{\1}{\mathds{1}}
\begin{document}

\maketitle

\begin{abstract}
Deep learning time-series models are often used to make forecasts that inform downstream decisions.
Since these decisions can differ from those in the training set, there is an implicit requirement that time-series models will generalize outside of their training distribution.
Despite this core requirement, time-series models are typically trained and evaluated on in-distribution predictive tasks.
We extend the orthogonal statistical learning framework to train causal time-series models that generalize better when forecasting the effect of actions outside of their training distribution. To evaluate these models, we leverage Regression Discontinuity Designs popular in economics to construct a test set of causal treatment effects.
\end{abstract}

\section{Introduction}
\label{sec:intro}
Deep learning models have recently seen major improvements, including in time-series modelling \cite{itransformer,PatchTST,TSMixer,Tide,zhang2023crossformer}.
Time-series models are often used to make forecasts that inform decisions, such forecasting demand for goods or transportation to optimize pricing, or forecasting health markers to optimize treatment decisions \cite{MAKRIDAKIS20221346,Nowroozilarki_2021,makridakis2023m6forecastingcompetitionbridging}.

The decision making procedures that leverage time-series forecast typically optimize for a given outcome, such as revenue or health outcome, and lead to actions that differ from those observed in the training data.
This results in a key implicit requirement on time-series models: that they will generalize outside of the observational distribution.
Despite this requirement, the only complex tasks on which time-series forecasting models are trained and evaluated are in-distribution predictive tasks \cite{itransformer,causal_transformers,causal_forecasting,PatchTST,TSMixer,Tide,zhang2023crossformer,Nowroozilarki_2021,MAKRIDAKIS20221346,makridakis2023m6forecastingcompetitionbridging}.

In this work, we develop a training and evaluation procedure for causal forecasting time-series models. Causal models aim to learn the causal relationship between a treatment (e.g., price) and its effect on a specific outcome (e.g., demand), conditioned on other, observational features.
That is, they aim to capture the change in outcome caused by a change in treatment, and not just predict the outcome from an in-distribution observed treatment.
As a result, they generalize better when forecasting the effect of actions outside of their training distribution.

A few causal forecasting models have been proposed,
but they use approaches that do not map to well defined causal effects  \cite{adversarial_loss,timeseriesdeconfounderestimating,causal_transformers}, do not correct for regularization bias \cite{li2020gnetdeeplearningapproach,adversarial_loss,timeseriesdeconfounderestimating,causal_transformers,causal_forecasting}, or take inspiration from causal frameworks for heuristics \cite{Brodersen_2015,schultz2024causalforecastingpricing}. 
In this work, we leverage the orthogonal statistical learning framework of \citet{foster2023orthogonal} to learn a causal forecasting model. We extend this framework to a well-defined time-series causal problem and to high-dimensional treatments, and instantiate it on top of state-of-the-art backbone time-series models to support complex use-cases.

Evaluating causal forecasting models is even more challenging, as outcomes under alternative treatments are by definition unobserved. Existing work thereby rely on simple simulations with known counterfactuals \cite{adversarial_loss,timeseriesdeconfounderestimating,li2020gnetdeeplearningapproach,causal_forecasting}, or in-distribution forecasting performance \cite{causal_forecasting,causal_transformers}. The former is often too simplistic to really compare models, while the later can be misleading and inflate the performance of non-causal models. We show details of this effect in \S\ref{sec:causal-background}.

To evaluate forecasting models on causal tasks, we take inspiration from Regression Discontinuity Designs (RDDs) popular in econometrics \cite{hahn2001identification,imbens2008regression}, to construct a test set of causal effects estimated under different assumptions than the typical model-fitting assumption. We can then compare the treatment effects estimated by forecasting models using traditional evaluation metrics.
We train and evaluate our causal forecasting models on two relevant tasks: demand forecasting using a proprietary dataset of passenger rail prices and demand over time; and the public health dataset MIMIC-III  \cite{MIMIC-III}.
We show that, compared to traditional time-series forecasting models and a state-of-the-art causal model, our causal forecasting models predict causal effects that are $36\%$ and $1\%$ closer to those estimated with RDDs, respectively.


\section{Causal Forecasting with Orthogonal Learning}
\label{sec:causal-forecasting}
In the reminder of this paper, we use a demand forecasting task for passenger rails as a running example. For each scheduled train, models aim to predict the daily number of seats sold at a given price using temporal information and other meta-data (see \S\ref{sec:eval-wm} for details). The end goal of these forecasts is to set prices to maximize revenue.
In \S\ref{sec:eval}, we evaluate our models on this task using a proprietary dataset, as well as on a public health forecasting dataset \cite{MIMIC-III}.

We abstract our forecasting task as a dataset of $N$ observed time-series ($N$ trains) indexed by $n$, with time indexed by $t$. Each time-series $n$ consists in stationary context features $S^n$, temporal (non-stationary) features measured at each time step $X^n_{1:t}$, a treatment at each time step $T^n_{1:t}$, and real valued outcomes $Y^n_{1:t}$ ($Y^n_t \in \sR$). The objective is to predict each outcome $Y^n_t$ using static, temporal, and treatment features, as well as past outcomes up to a time $\tau < t$ ($Y^n_{\tau+1:t-1}$ is unknown).
We thus seek to learn $\hat{Y}^n_t \triangleq f(S^n, X^n_{1:t},T^n_{1:t},Y^n_{1:\tau})$, such that $\hat{Y}^n_t$ is close to $Y^n_t$. This setting differs from standard time-series forecasting tasks \cite{itransformer}, but is more relevant to causal tasks (\S\ref{sec:causal-forecasting}).

We next formalize causal effects, show their importance to our task, and introduce orthogonal learning theory that we leverage in our models (\S\ref{sec:causal-background}). We then extend orthogonal learning to time-series models (\S\ref{sec:causal-forecasting-theory}), and instantiate this theory with deep learning architectures (\S\ref{sec:causal-forecasting-models}).

\subsection{Background: Causal Treatment Effects and Orthogonal Learning}
\label{sec:causal-background}

To formalize the importance of causality in forecasting for optimizing decisions, we first need to introduce treatment effects. In the reminder of this paper, we use the potential outcomes (or Rubin's) model of causal inference \cite{holland1986statistics,10.5555/2764565}, and flexible data models from the causality \cite{robinson,Nie_Wager,chernozhukov2018double}.

{\bf Potential Outcomes} represent the values of an outcome under different possible treatments. Consider our demand forecasting setting with two possible prices (treatments) $T_1$ and $T_2$. For a given time-series $n$ and time step $t$, we define the potential outcomes $Y^n_t(T_1)$ and $Y^n_t(T_2)$ as the value of the demand $Y^n_t$ we would observe under each price. Of course, we only ever observe one potential outcome, as $Y^n_t = \1_{T_1} Y^n_t(T_1)+\1_{T_2} Y^n_t(T_2)$ where $\1$ is the indicator function.

{\bf Treatment Effects} represent the causal effect of a change in treatment (price) on the outcome of interest (demand). Formally, the treatment effect of going from price $T_1$ to $T_2$, for time-series $n$, at time step $t$, is $Y^n_t(T_2) - Y^n_t(T_1)$.
The fundamental challenge of causal inference is that we can only ever observe one potential outcome, for the treatment we chose. As a result, we cannot train a supervised treatment effects model. However, we can estimate expected causal effects under unconfoundedness, the main assumption in the literature which we make when learning our models:

\begin{assumption}[Unconfoundedness]\label{eq:assumption-uncounfoundedness} Consider a set of possible treatments $\cT$. We say that the time-series $(Y_t(T_t), T_t, S, X_t)$ are (conditionally) uncounfounded if, for every time-step $t > \tau$:
\[
\{Y_t(T_t),~T_t \in \cT \} \ind T_t \mid S,~X_{1:t},~T_{1:\tau},~Y_{1:\tau}
\]
\end{assumption}
Intuitively, this assumption means that in sub-populations defined by the conditioning variables, the groups receiving each treatment are comparable (have the same potential outcomes distribution).
Unconfoundedness is verified by construction in (conditionally) randomized trials, as treatments are assigned independently of potential outcomes.
It is also verified when context variables $S,~X_{1:t},~T_{1:\tau},~Y_{1:\tau}$ include all pre-treatment variables that are common causes of $T^n$ and $Y^n$ \cite{pearl2009causality}. In our running example, this consists in all variables that influence both demand and pricing decisions, such as the cities of arrival and departure, day of the trip, and more. Under Assumption~\ref{eq:assumption-uncounfoundedness}, a well known identifyability result holds (see e.g., \cite{10.5555/2764565,pearl2009causality}):

\begin{proposition}[CATE Identifyablity]\label{prop:cate}
Under Assumption~\ref{eq:assumption-uncounfoundedness}, we can identify the Conditional Average Treatment Effect (CATE) as follows:
\begin{align*}
    \textrm{CATE} & \triangleq \E\big[Y_t(T_2) - Y_t(T_1) \mid S,~X_{1:t},~T_{1:\tau},~Y_{1:\tau} \big] \\
    & = \E\big[Y_t \mid T_t=T_2,~S,~X_{1:t},~T_{1:\tau},~Y_{1:\tau} \big] - \E\big[Y_t \mid T_t=T_1,~S,~X_{1:t},~T_{1:\tau},~Y_{1:\tau} \big] .
\end{align*}
\end{proposition}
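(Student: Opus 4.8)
The plan is to verify the identity one treatment at a time, reducing each potential-outcome expectation to a purely observational quantity. Throughout, abbreviate the conditioning tuple $(S, X_{1:t}, T_{1:\tau}, Y_{1:\tau})$ by $\mathcal{C}$. By linearity of conditional expectation,
\[
\textrm{CATE} = \E\big[Y_t(T_2) \mid \mathcal{C}\big] - \E\big[Y_t(T_1) \mid \mathcal{C}\big],
\]
so it is enough to show, for each $j \in \{1,2\}$, that $\E[Y_t(T_j) \mid \mathcal{C}] = \E[Y_t \mid T_t = T_j,\ \mathcal{C}]$ (almost surely in $\mathcal{C}$), and then subtract the $j=1$ term from the $j=2$ term.

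The argument for a fixed $j$ has two steps. First, I would invoke Assumption~\ref{eq:assumption-uncounfoundedness}: since $\{Y_t(T_t),\ T_t \in \cT\} \ind T_t \mid \mathcal{C}$, the potential outcome $Y_t(T_j)$ is conditionally independent of $T_t$ given $\mathcal{C}$, so additionally conditioning on the event $\{T_t = T_j\}$ leaves its conditional mean unchanged: $\E[Y_t(T_j) \mid \mathcal{C}] = \E[Y_t(T_j) \mid T_t = T_j,\ \mathcal{C}]$. Second, I would use the consistency relation built into the potential-outcomes model — namely $Y_t = \1_{T_1} Y_t(T_1) + \1_{T_2} Y_t(T_2)$, which forces $Y_t = Y_t(T_j)$ on the event $\{T_t = T_j\}$ — to replace $Y_t(T_j)$ by the observed $Y_t$ inside the last expectation, giving $\E[Y_t(T_j) \mid T_t = T_j,\ \mathcal{C}] = \E[Y_t \mid T_t = T_j,\ \mathcal{C}]$. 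Chaining these two equalities and taking the difference over $j$ yields the stated formula.

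The step I expect to be the crux is the first one — trading the (marginal-over-treatment) expectation of the counterfactual $Y_t(T_j)$ for an expectation conditioned on actually having received $T_j$; this is precisely what unconfoundedness buys, and it is also where the implicit positivity/overlap requirement enters, since the observational conditional expectations on the right-hand side are only well-defined when $\{T_t = T_j\}$ has positive probability given $\mathcal{C}$. The remaining manipulations (linearity, the consistency substitution, and almost-sure bookkeeping in $\mathcal{C}$) are routine. For a fully rigorous treatment I would phrase the conditional independence at the level of $\sigma$-algebras and note that all the equalities hold up to $\mathcal{C}$-null sets.
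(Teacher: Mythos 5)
Your proof is correct and is exactly the standard unconfoundedness-plus-consistency argument that the paper itself does not spell out but defers to its cited references (Imbens--Rubin, Pearl); your added remark about the implicit positivity requirement for the observational conditional expectations to be well-defined is also apt.
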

This important result shows that we can estimate the Conditional Average Treatment Effect (CATE) from conditional expectations $\E\big[ Y_t \mid T_t=T,~S,~X_{1:t},~T_{1:\tau},~Y_{1:\tau} \big]$. This is a familiar quantity to ML practitioners, as a flexible forecasting model $f: T, S, X, T_{1:\tau}, Y_{1:\tau} \rightarrow Y$ trained with a mean-squared error will converge to this conditional expectation in the limit of infinite data. Estimating the CATE $\E\big[Y_t(T_2) - Y_t(T_1) \mid S,~X_{1:t},~T_{1:\tau},~Y_{1:\tau} \big]$ is then as simple as predicting on the two treatment values $f(T_t=T_2,\ldots) - f(T_t=T_1,\ldots)$.
Unfortunately, practice is more challenging.

{\bf The need for causal forecasting} arises when fitting complex forecasting models with finite sample sizes. In observational data collected from regular interactions, such as observed demand for passenger rail seats during regular operation, some treatments will be over-represented. For instance, in our data we observe high prices when demand is high, and low prices when demand is low. Of course, we would expect higher prices to cause a {\em decrease} in demand. But the correlation in the data is flipped, as the transportation company operators use their experience and domain knowledge about demand to set higher prices when underlying demand is high, thereby increasing revenue.

\begin{figure}[h]\
\begin{subfigure}[b]{.3\textwidth}
  \centering
  \includegraphics[width=\linewidth]{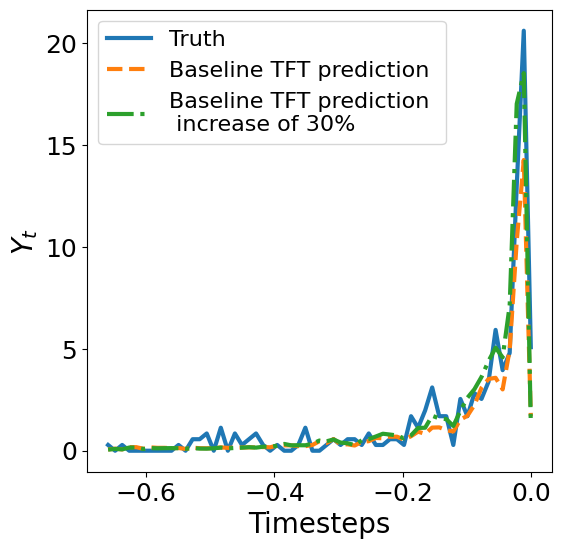}
  \caption{TFT}
  \label{fig:motivation-train-example-TFT}
\end{subfigure}
\hfill
\begin{subfigure}[b]{.3\textwidth}
  \centering
  \includegraphics[width=\linewidth]{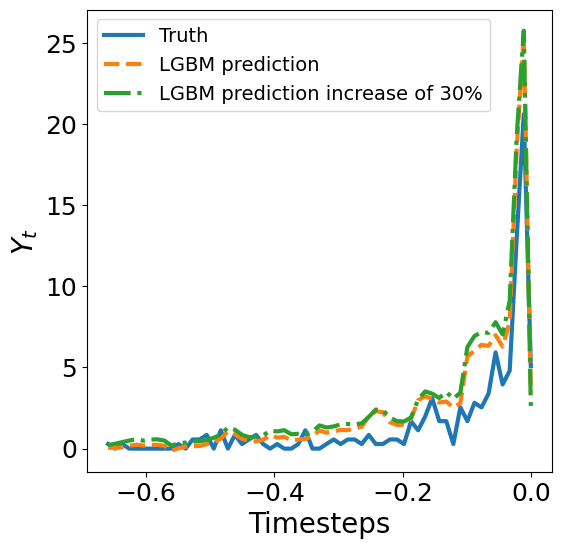}
  \caption{LGBM}
  \label{fig:motivation-train-example-LGBM}
\end{subfigure}
\hfill
\begin{subfigure}[b]{0.3\textwidth}
    \centering
    \includegraphics[width=\linewidth]{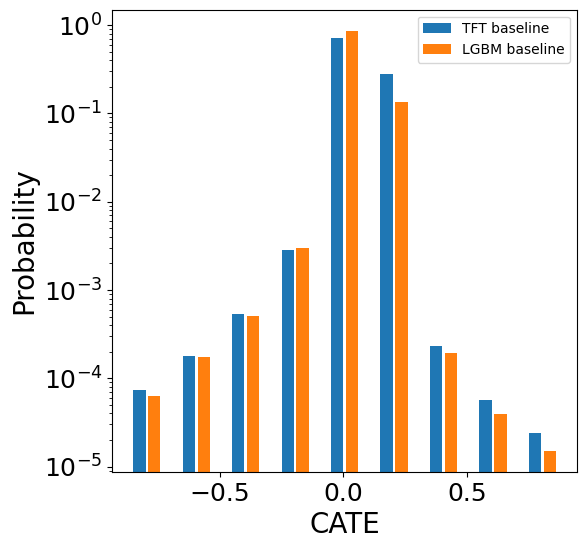}
    \caption{Treatment effects distribution}
    \label{fig:motivation-elasticity}
\end{subfigure}
\vspace{-0.5em}
\caption{Forecasts from a LightGBM and a TFT model. (a),(b): Daily demand forecast change under price increase for one train. The blue line is the ground truth; the orange line is the model's forecast for observed prices; the green curve is the forecast under a 30\% price increase. Both models predict an increase in demand under increased prices. (c): CATE distribution over the test set, for a change from the observed price to the next possible price, normalized by the price change.}
\label{fig:motivation}
\end{figure}

When trained on finite data with such biased treatment assignments, forecasting models will extrapolate based on feature and treatment values. This introduces bias in treatment effect estimates when models are not flexible enough to represent the true function, or due to regularization (both explicit regularization such as weight decay, or deep learning models' implicit regularization \citep{soudry2018implicit,ji2020gradient}). This effect, called regularization bias, happens even under simple treatment effect functional forms \citep{chernozhukov2018double}.
In practice, this is exactly what we observe when fitting ML models on our demand forecasting task.
Figure \ref{fig:motivation} shows predicted treatment effects from a well tuned LightGBM \cite{ke2017lightgbm} boosted tree and an improved version of the Temporal Fusion Transformer (TFT) \cite{TFT} deep learning model. Both models have comparable in distribution performance, with a small edge for the LGBM model (details in \S\ref{sec:eval}). Figures \ref{fig:motivation-train-example-TFT} and \ref{fig:motivation-train-example-LGBM} show predictions for a specific train: both models predict demand quite accurately using the realized price trajectory (orange line, $T^n_{1:t}$ is set to the observed prices). However, when increasing the prices by 30\% for all $t$ (green line), both models predict an {\em increase} in demand. This is the correlation we observe in the data, of opposite sign to the causal relationship we expect. Figure \ref{fig:motivation-elasticity} shows that this affects a significant proportion of test set trains: the LGBM and TFT models respectively predict 13\% and 28\% of positive treatment effects.

\begin{algorithm}[H]
    \caption{Sample Splitting Learning Algorithm from \citet{foster2023orthogonal}}
    \begin{algorithmic}

    \State

    \State \textbf{Input:} Sample set $S = z_1, \ldots, z_n$

    \State Split S into subsets $S_1 = z_1, \ldots, z_{\lfloor n/2 \rfloor}$, and $S_2 = S \backslash S_1$
    \State Fit $g$ for $g_0$ on $S_1$
    \State Fit $\theta$ for $\theta_0$ on $S_2$ using $g$ for $g_0$ in the loss
    \State \textbf{Return} $g, \theta$
    \end{algorithmic}
    \label{alg:orthogonal-learning}
\end{algorithm}

{\bf Orthogonal Learning} \citep{foster2023orthogonal} is a recent learning theory framework to study the convergence rate of models with nuisance parameters, which can be used to estimate CATE models. The framework is very general: we next describe the core results relevant our work, that we later extend to our setting (\S\ref{sec:causal-forecasting-theory}). The results for treatment effect estimation consider a standard binary treatment effect data model \citep{robinson,Nie_Wager,chernozhukov2018double}, with observations $z=(W, Y, T)$ from distribution $\cD$, and $T \in \{0, 1\}$, such that:
\begin{align}
  &  Y = T \cdot \theta_0(W) + f_0(W) + \epsilon_1 , & \mathbb{E}[\epsilon_1|W,T] = 0 \label{eq:data-model-binary-y}\\
  &  T = e_0(W) + \epsilon_2 , & \mathbb{E}[\epsilon_2|W] = 0 \label{eq:data-model-binary-t}
\end{align}
Further define $m_0(w) \triangleq \E[Y | W=w]$. We call $g_0 \triangleq (e_0, m_0)$ the nuisance parameters, required to learn the CATE $\theta_0$ but not direct objects of interest in this framework. In this model $\theta_0(W)$ is the CATE (as it is turned on and off by $T$, see Proposition \ref{prop:cate}).
This data model is quite flexible, as the CATE depends arbitrarily on the features $W$ through $\theta_0$, as do the outcome $Y \in \sR$ and treatment $T$ through $f_0, e_0$. Note that under Assumption \ref{eq:assumption-uncounfoundedness}, this decomposition is without loss of generality.

Calling $\theta, e, m$ models for $\theta_0, e_0, m_0$, respectively, we can define the residualized loss (R-loss) $l(\cdot)$, and its population equivalent $L_{\cD}(\cdot)$, as:
\[\label{eq:def-r-loss}
l(\theta, m, e; z) = \Big( Y - m(W) - \big(T - e(W)\big)\theta(W) \Big)^2 \ \ ; \ \ L_{\cD}(\theta, m, e) = \E_{z \sim \cD} [l(\theta, m, e; z)].
\]

Now consider a model $g$ for $g_0$ learned on sample $S$, such that $\|g - g_0\|_{L_2} \triangleq \sqrt{\E_{z\sim \cD} \|g(w) - g_0(w)\|^2_2} \leq \textrm{Rate}_{\cD}(S, \delta)$ w.p. at least $1-\delta$. Since $m_0(w) = \E[Y | W=w]$ and $e_0(w) = \E[T | W=w]$, these can be models that minimize the mean squared error. 

Further consider an estimation algorithm of model $\theta$ for $\theta_0$ which, given a model $g$ for nuisance parameters, outputs $\theta \in \Theta$ using sample $S$ (e.g., by minimizing the empirical loss $l(\cdot)$ over $S$) such that $L_\cD(\theta, g) - L_\cD(\theta^*, g) \leq \textrm{Rate}_\cD(S, \delta; \theta, g)$, where $\theta^* \triangleq \arg\min_{\theta \in \Theta} L_\cD(\theta, m_0, e_0)$ is the best predictor for $\theta_0$ in class $\Theta$.
A low risk $L_\cD(\theta, g) - L_\cD(\theta^*, g)$ does not indicate a good model $\theta$, as the loss $l$ is computed with $g$, and not $g_0$ the true nuisance parameter.
However, \citet{foster2023orthogonal} show (in Theorem 2, and Appendix J.2, with a constant we derive in Appendix \ref{sec:appendix_theory}) that:
\begin{proposition}[Orthogonal Learning for Binary Treatment Effect]\label{prop:orthogonal-learning-binaray}
Consider a class of functions $\Theta$ where $\forall W, \ \forall \theta \in \Theta: \ |\theta(W)| \leq M$.
Fitting $g$ and $\theta$ with the procedures above using Algorithm \ref{alg:orthogonal-learning}, we have that with probability at least $1-\delta$:
\begin{align*}
\E\big[ \big(T - e_0(W)\big)\big(\theta(W) - \theta^*(W) \big) \big]^2 & \leq L_\cD(\theta, g_0) - L_\cD(\theta^*, g_0) \\
     & \leq \textrm{Rate}_\cD(S_2, \delta/2; \theta, g) + 2(1+M^2) \cdot \textrm{Rate}_{\cD}(S_1, \delta/2)^2 .
\end{align*}
\end{proposition}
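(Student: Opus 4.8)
The plan is to prove the two inequalities separately; the left one is purely deterministic, and the right one is where sample splitting together with Neyman orthogonality of the R-loss does the work.

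\textbf{Left inequality.} First I would put $L_\cD(\theta, g_0)$ in closed form. Using the data model of equations~\ref{eq:data-model-binary-y}--\ref{eq:data-model-binary-t} and $m_0(W) = e_0(W)\theta_0(W) + f_0(W)$, the true-nuisance residual simplifies to $Y - m_0(W) - (T - e_0(W))\theta(W) = \epsilon_2\big(\theta_0(W) - \theta(W)\big) + \epsilon_1$. Squaring and using $\E[\epsilon_1\mid W,T] = 0$ and $\E[\epsilon_2\mid W] = 0$ to kill the cross term gives $L_\cD(\theta, g_0) = \E\big[\epsilon_2^2(\theta_0 - \theta)^2\big] + \E[\epsilon_1^2]$. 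Hence, writing $\langle a,b\rangle_\nu \triangleq \E\big[(T-e_0(W))^2 a(W) b(W)\big]$ and $\|a\|_\nu^2 \triangleq \langle a,a\rangle_\nu$, we obtain $L_\cD(\theta, g_0) - L_\cD(\theta^*, g_0) = \|\theta_0 - \theta\|_\nu^2 - \|\theta_0 - \theta^*\|_\nu^2 = \|\theta - \theta^*\|_\nu^2 + 2\langle \theta_0 - \theta^*,\, \theta^* - \theta\rangle_\nu$. Because $\theta^*$ minimizes $\|\theta_0 - \cdot\|_\nu$ over $\Theta$ (and $\Theta$ is convex), the projection / first-order inequality gives $\langle \theta_0 - \theta^*, \theta - \theta^*\rangle_\nu \le 0$, so $L_\cD(\theta, g_0) - L_\cD(\theta^*, g_0) \ge \|\theta - \theta^*\|_\nu^2$. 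Finally $\E[X]^2 \le \E[X^2]$ with $X = (T-e_0(W))(\theta - \theta^*)$ gives $\E[(T-e_0(W))(\theta-\theta^*)]^2 \le \|\theta-\theta^*\|_\nu^2$, closing the left inequality.

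\textbf{Right inequality.} I would add and subtract $L_\cD(\theta, g)$ and $L_\cD(\theta^*, g)$ to split the excess risk into $(\mathrm{I}) = L_\cD(\theta, g_0) - L_\cD(\theta, g)$, $(\mathrm{II}) = L_\cD(\theta, g) - L_\cD(\theta^*, g)$, and $(\mathrm{III}) = L_\cD(\theta^*, g) - L_\cD(\theta^*, g_0)$. Term $(\mathrm{II})$ is at most $\textrm{Rate}_\cD(S_2, \delta/2; \theta, g)$ by the estimation guarantee for $\theta$, which holds with probability $\ge 1-\delta/2$ over $S_2$ for every fixed $g$ (and $g$ is a function of $S_1$ only). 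For $(\mathrm{I})$ and $(\mathrm{III})$ I would expand, for a fixed $\theta'$, $L_\cD(\theta', g) - L_\cD(\theta', g_0)$: with $\Delta_m = m - m_0$, $\Delta_e = e - e_0$, the residual at $g$ equals the residual at $g_0$ plus $h_{\theta'} \triangleq -\Delta_m(W) + \Delta_e(W)\theta'(W)$, so the difference equals $2\,\E\big[(\epsilon_2(\theta_0 - \theta') + \epsilon_1)\, h_{\theta'}\big] + \E[h_{\theta'}^2]$. The first term vanishes by the same two conditional-mean-zero identities, which is precisely Neyman orthogonality of the R-loss, and since the R-loss is quadratic there is no remainder, so $L_\cD(\theta', g) - L_\cD(\theta', g_0) = \E[h_{\theta'}^2] \ge 0$ exactly. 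Thus $(\mathrm{I}) = -\E[h_\theta^2] \le 0$ and $(\mathrm{III}) = \E\big[(\Delta_m - \Delta_e\theta^*)^2\big] \le 2\E[\Delta_m^2] + 2M^2\E[\Delta_e^2] \le 2(1+M^2)\|g - g_0\|_{L_2}^2$, using $(a+b)^2 \le 2a^2 + 2b^2$ and $|\theta^*|\le M$. Bounding $\|g-g_0\|_{L_2} \le \textrm{Rate}_\cD(S_1,\delta/2)$ on an event of probability $\ge 1-\delta/2$ over $S_1$ and intersecting with the event for $(\mathrm{II})$ by a union bound yields the right inequality with probability $\ge 1-\delta$.

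\textbf{Main obstacle.} The crux is the orthogonality step inside $(\mathrm{I})$ and $(\mathrm{III})$: recognizing that the R-loss is built so that the first-order sensitivity of $L_\cD(\theta', \cdot)$ at $g_0$ is zero, which is exactly what downgrades the nuisance error from entering linearly to entering quadratically and produces the $\textrm{Rate}^2$ factor. The rest is bookkeeping, but two points need care: the sample split must be used so that $g$ (from $S_1$) is independent of the randomness used to fit and bound $\theta$ (from $S_2$), so the two high-probability events can be intersected cleanly; and the left inequality needs $\Theta$ convex (equivalently, first-order stationarity of $\theta^*$ in $\|\cdot\|_\nu$) for the projection step.
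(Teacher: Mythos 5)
Your proof is correct and lands exactly on the stated constant, but it takes a genuinely different route from the paper's. The paper does not argue this proposition directly: it verifies that the population R-loss satisfies universal Neyman orthogonality (the cross directional derivative $D_g D_\theta L_\cD$ vanishes at $g_0$) and a second-order boundedness condition with $\beta = 2(1+dM^2)$, obtained from the largest eigenvalue of the Hessian of the loss in the nuisance arguments (Appendix B), and then invokes Theorem 2 of \citet{foster2023orthogonal} as a black box; the binary case is the specialization $d=1$. You instead exploit the fact that the R-loss is exactly quadratic in $(m,e)$, so that $L_\cD(\theta',g)-L_\cD(\theta',g_0)=\E\big[h_{\theta'}^2\big]$ holds with no Taylor remainder, and you derive both inequalities from first principles: the left one via the closed form $L_\cD(\theta,g_0)=\|\theta_0-\theta\|_\nu^2+\E[\epsilon_1^2]$, the optimality of $\theta^*$, and Jensen; the right one via the three-term decomposition, dropping $(\mathrm{I})\le 0$ and bounding $(\mathrm{III})\le 2(1+M^2)\|g-g_0\|_{L_2}^2$. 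What the paper's route buys is generality: the same assumption-checking carries over verbatim to the vector-valued encodings of Proposition 4 and would also handle losses where the nuisance dependence is not exactly quadratic and a genuine second-order remainder must be controlled. What your route buys is transparency: the mechanism by which orthogonality converts nuisance error from first order to $\textrm{Rate}^2$ is visible as the single vanishing cross term $\E\big[(\epsilon_2(\theta_0-\theta')+\epsilon_1)h_{\theta'}\big]=0$. The one caveat, which you correctly flag yourself, is that the left inequality requires $\Theta$ to be convex (or at least star-shaped around $\theta^*$) for the first-order optimality step; this hypothesis is not written in the proposition but is inherited from the assumptions of the cited theorem, so it is not a gap relative to the paper.
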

There are three key takeaways. {\bf (1)} we can get a CATE model $\theta$ close to the best possible predictor in the class (which can be $\theta_0$ if $\theta_0 \in \Theta$) for all values $W$ with treatment variation around the mean. This is similar to the positivity assumption ($\forall T, W: 1 > \sP(T | W) > 0$) pervasive in causal inference results. {\bf (2)} the rate at which $\theta$ converges to $\theta^*$ under estimated nuisance parameters $g$ is sufficient to bound the error under the true parameters $g_0$. {\bf (3)} the rate of estimation of the nuisance parameters is squared: Algorithm \ref{alg:orthogonal-learning} is tolerant of errors in estimating complex nuisance parameters. For instance, a rate of convergence of $1/\sqrt{N}$ for $\theta$, is preserved if $g$ converges as $o(1/N^{\frac{1}{4}})$.

\subsection{Orthogonal Learning for Time Series Models}
\label{sec:causal-forecasting-theory}

We adapt the orthogonal statistical learning framework described above by making two key extensions: defining daily treatment effects with an observation cut-off; and extending the R-loss to categorical and linear effects with various encodings for $\theta$'s predictions.

{\bf Treatment effects in time series.} The first step is to formalize the treatment effects to predict, and the context to predict them with. In this paper, we focus on {\em daily treatment effects} defined in Proposition \ref{prop:cate}. While technically past prices are likely to causally impact demand at $t$, domain knowledge tells us that the price at $t$ is the dominant factor. This modelling choice lets us focus on single-day treatments, though supporting sequences of treatments is an interesting avenue for future work. Focusing on a time-step $t$, this implies that $Y = Y_t$ in Equation \ref{eq:data-model-binary-y} and $T = T_t$ in Equation \ref{eq:data-model-binary-t}.

Defining the relevant feature set ($W$ in Eq. \ref{eq:data-model-binary-y}, \ref{eq:data-model-binary-t}) is more tricky. Recall from \S\ref{sec:causal-forecasting} that our time series consist in static ($S$) and temporal ($X_{1:t}$) features. These features are naturally included ($W \supset (S, X_{1:t})$).
In time-series, results from past time steps, such as past prices and associated demands $(T_{1:t-1}, Y_{1:t-1})$, are typically very predictive.
However, both the uncondoundedness Assumption \ref{eq:assumption-uncounfoundedness} and the orthogonal leaning data model (Eq. \ref{eq:data-model-binary-y}, \ref{eq:data-model-binary-t}) implicitly assume that features $W$ follow the observational distribution at training and prediction time.
This assumption is not verified when models are used to influence past decisions, and $T_{<t}, Y_{<t}$ cannot be included in $X$.
This is an important point: observational past treatment are very predictive of future demand, but confound the daily treatment effect and are the cause of reversed elasticity predictions we observed (\S\ref{sec:causal-background}, Figure \ref{fig:motivation}).
To avoid this confounding, we introduce a cut-off $\tau$ after which treatments can deviate from the distribution. That is, in time-steps $1:\tau$ the model is not used to affect treatments, while for $t>\tau$, the treatment $T_t$ can differ from the training distribution. We then set the features as $W = (S,X_{1:t}, T_{1:\tau}, Y_{1:\tau}) \triangleq W_t$, which we use as input for our daily treatment effect of $T_t$ on $Y_t$.

{\bf High dimensional effects.} The other extension we require is support for higher-dimensional treatments. We consider both categorical and linear treatments and treatment effects. We start with categorical treatments, for which we propose two different encodings. In our demand forecasting dataset, possible prices are discrete, following a large but tractable set of possible values (about thirty possible prices) of size $d \triangleq |\mathcal{T}|$. We extend the approach of \citet{robinson, Nie_Wager,foster2023orthogonal} (Equations \ref{eq:data-model-binary-y}, \ref{eq:data-model-binary-t}) to model each time-step $t$ as follows:
\begin{align}
    & Y_t = T_t^T\theta_0(W_t) + f_0(W_t) + \epsilon_1, & \mathbb{E}[\epsilon_1|W_t,T_t] = 0  \label{eq:data-model-categ-y}\\
    & T_t = e_0(W_t) + \epsilon_2, & \mathbb{E}[\epsilon_2|W_t] = {\bf 0}_d \label{eq:data-model-categ-t}
\end{align}
where $T_t$ is a $d$-dimensional column vector encoding the treatment, $\theta_0(W_t) \in \sR^d$ outputs a column vector treatment effect, and ${\bf 0}_d$ is the $d$-dimensional zero vector.
The R-loss becomes:
\begin{equation}\label{eq:r-loss-categ}
    l(\theta,{m,e},z_t) = \Big(Y_t - m(W_t) - \big(T_t - e(W_t)\big)^T\theta(W_t) \Big)^2 .
\end{equation}

We consider two encodings for the categorical treatment $T_t$ and treatment effect $\theta(W_t)$. In the {\em one-hot encoding}, $T_t$ is a one-hot vector of the treatment. As a result, the $i^{th}$ dimension of the treatment effect model encodes the CATE compared to $f_0$. That is, $\theta(W_t)_i = \E\big[Y_t(T_i) - f_0(W_t) \ | \ W_t \big]$.

In the {\em cumulative encoding}, the treatment model encodes the treatment effect of incremental price changes, such that the cumulative predictions encode the CATE: $\sum_{j=1}^i \theta(W_t)_j = \E\big[Y_t(T_i) - f_0(W_t) \ | \ W_t \big]$. In this case, $T_t$ is a vector of ones in dimensions $1:i$, and zeros after. While we do not do it in this work, the cumulative encoding can be useful to encode constraints, such as fixing the sign of incremental effects of price changes.

Given true categorical treatments and our flexible deep-learning models, we can reasonably assume that $\theta_0 \in \Theta$, and thus $\theta^* = \theta_0$. With enough treatment variation, our model will converge to the true CATE.
In practice however, demand forecasting datasets only see local variation around a typical price for any particular value of features $W_t$. As a result, a small risk under the R-loss
only ensures an accurate $\theta$ around likely prices. The model is under-constrained further from typical prices, where it will not learn relevant treatment effects.
This is a fundamental limitation of any causal approach, as local variations around a typical price implies a lack of positivity for most dimensions of $\theta(W_t)$.

In such cases, adding structure to the treatment effect model is practically useful. To this end, we also consider a {\em linear encoding} for the treatment effect. Concretely, we use the data model of Equations \ref{eq:data-model-categ-y} and \ref{eq:data-model-categ-t}, where $T$ is a one-dimensional, real treatment value (the price at which we want to predict demand), and the treatment effect $\theta(W_t)$ is a multiplicative coefficient. This creates a linear effect model, where the linear coefficient depends on $W_t$ in a flexible way. As a result, the real treatment effect is probably not in the model class ($\theta_0 \not\in \Theta$), and we converge to $\theta^*$. In exchange, estimations under this model can be much more stable under moderate treatment variability.

All encodings are compatible with the following result:
\begin{proposition}[Orthogonal Learning for Categorical Treatment Effects]\label{prop:orthogonal-learning-categ}
Consider a class of functions $\Theta: \mathcal{W} \rightarrow \sR^d$, where $\forall W, \ \forall \theta \in \Theta, \ \forall i \in \{1,d\}: \ |\theta(W)_i| \leq M$.
Fitting $g$ and $\theta$ with the procedures above using Algorithm \ref{alg:orthogonal-learning}, we have that with probability at least $1-\delta$:
\begin{align*}
\E\big[ \big(T - e_0(W)\big)^T\big(\theta(W) - \theta^*(W) \big) \big]^2 & \leq L_\cD(\theta, g_0) - L_\cD(\theta^*, g_0) \\
     & \leq \textrm{Rate}_\cD(S_2, \delta/2; \theta, g) + 2(1+d M^2) \cdot \textrm{Rate}_{\cD}(S_1, \delta/2)^2 .
\end{align*}
\end{proposition}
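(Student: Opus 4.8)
The strategy is to mirror the proof of Proposition 3 (the binary case), tracking how the vector-valued treatment and treatment effect change the constants. Recall the R-loss $l(\theta, m, e; z_t) = \big(Y_t - m(W_t) - (T_t - e(W_t))^\top \theta(W_t)\big)^2$. The first step is the algebraic decomposition: substitute the data model $Y_t = T_t^\top \theta_0(W_t) + f_0(W_t) + \epsilon_1$ into $l$ evaluated at the true nuisances $g_0 = (e_0, m_0)$, and expand $L_\cD(\theta, g_0) - L_\cD(\theta^*, g_0)$. Using $\E[\epsilon_1 \mid W_t, T_t] = 0$ and $m_0(W_t) = \E[Y_t \mid W_t]$, the cross terms involving $\epsilon_1$ vanish and the difference collapses to $\E\big[\big((T_t - e_0(W_t))^\top(\theta(W_t) - \theta_0(W_t))\big)^2\big] - \E\big[\big((T_t - e_0(W_t))^\top(\theta^*(W_t) - \theta_0(W_t))\big)^2\big]$. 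Since $\theta^*$ minimizes this population quantity over $\Theta$, a first-order optimality (projection) argument gives that the cross term $\E\big[(T_t - e_0)^\top(\theta - \theta^*) \cdot (T_t - e_0)^\top(\theta^* - \theta_0)\big]$ is nonnegative, which yields the lower bound $\E\big[(T - e_0(W))^\top(\theta(W) - \theta^*(W))\big]^2 \le \E\big[\big((T - e_0)^\top(\theta - \theta^*)\big)^2\big] \le L_\cD(\theta, g_0) - L_\cD(\theta^*, g_0)$ (the first inequality is Jensen / Cauchy-Schwarz on the scalar random variable $(T-e_0)^\top(\theta-\theta^*)$).

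The second step is the sample-splitting / orthogonality argument, which is where the bulk of the work from \citet{foster2023orthogonal} is invoked. Decompose $L_\cD(\theta, g_0) - L_\cD(\theta^*, g_0) = \big(L_\cD(\theta, g) - L_\cD(\theta^*, g)\big) + \big(L_\cD(\theta, g_0) - L_\cD(\theta, g)\big) - \big(L_\cD(\theta^*, g_0) - L_\cD(\theta^*, g)\big)$. The first bracket is bounded by $\textrm{Rate}_\cD(S_2, \delta/2; \theta, g)$ by assumption on the estimation algorithm (on $S_2$, conditionally on $S_1$ which fixes $g$). The remaining two brackets measure the sensitivity of the excess R-loss to plugging in $g$ instead of $g_0$. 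Here one uses the Neyman-orthogonality of the R-loss: the directional (Gateaux) derivative of $L_\cD(\theta, \cdot) - L_\cD(\theta^*, \cdot)$ with respect to the nuisance, evaluated at $g_0$, is zero. A second-order Taylor expansion in the nuisance direction then controls these terms by the squared nuisance error $\|g - g_0\|_{L_2}^2$, times a constant coming from the second derivative.

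The third step — and the only place the categorical dimension $d$ genuinely enters — is to compute that second-derivative constant carefully. In the binary case this constant is $2(1+M^2)$; here the analogous Hessian-in-nuisance term involves $(\theta(W) - \theta^*(W))$ contracted against the nuisance perturbations, and $\|\theta(W) - \theta^*(W)\|$ can be as large as $2M\sqrt{d}$ by the coordinatewise bound $|\theta(W)_i| \le M$ over $d$ coordinates, so the squared quantity contributes a factor $d M^2$. Tracking the bilinear terms (the perturbation in $e$ multiplied by the perturbation in $m$, and the $(\theta-\theta^*)$-weighted square of the $e$-perturbation) and bounding each by Cauchy-Schwarz against $\|g-g_0\|_{L_2}^2$, one collects the constant $2(1 + dM^2)$. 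Combining with the $S_1$ nuisance rate $\textrm{Rate}_\cD(S_1, \delta/2)$ (each holding with probability $1-\delta/2$, union-bounded to $1-\delta$) gives the stated bound.

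The main obstacle is the bookkeeping in the third step: being careful that the $\ell_2$ norm of the $d$-dimensional residualized terms produces exactly the factor $d$ (not $d^2$ or $\sqrt{d}$) in front of $M^2$, and that the one-hot vs.\ cumulative vs.\ linear encodings all satisfy the hypotheses with the same constant — in particular that $\|T_t - e_0(W_t)\|$ and the relevant moment bounds behave uniformly across encodings. Everything else is a routine transcription of the binary proof with scalars replaced by vectors and products replaced by inner products.
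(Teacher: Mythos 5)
Your overall route is the same as the paper's: verify that the vector-valued R-loss is Neyman-orthogonal in the nuisances, bound its second directional derivative in $g$ to extract the constant $\beta$, and invoke the sample-splitting meta-theorem of \citet{foster2023orthogonal} (Theorem 2). The paper simply cites that theorem as a black box after checking its two hypotheses in Appendices \ref{sec:appendix_assumptions_1} and \ref{sec:appendix_assumptions_2}, whereas you partially unfold its internals (the excess-risk decomposition across $g$ and $g_0$, and the lower bound via first-order optimality of $\theta^*$ plus Jensen); both of those sketches are fine and match what the cited theorem does.

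The one concrete problem is in your third step, where you locate the factor $dM^2$ in $\|\theta(W)-\theta^*(W)\|^2$. That quantity is only bounded by $4dM^2$ under the coordinatewise bound $|\theta(W)_i|\le M$, so the bookkeeping as you describe it yields $\beta = 2(1+4dM^2)$, not the stated $2(1+dM^2)$. The paper's computation (Appendix \ref{sec:appendix_assumptions_2}) instead writes the loss as $l(\gamma,\zeta,z)=\big(Y-\gamma_1-(T-\gamma_2)^\top\zeta\big)^2$ with $\gamma=(m,e)$ the nuisance and $\zeta=\theta(W)$ the coefficient multiplying the $e$-perturbation; the nuisance Hessian $\nabla_{\gamma\gamma}l$ is rank one with unique non-zero eigenvalue $2\big(1+\sum_{i=1}^d\zeta_i^2\big)\le 2(1+dM^2)$, because the relevant vector is $\theta(W)$ itself (for $\theta$ in the star hull, which preserves the coordinatewise bound), not the difference $\theta-\theta^*$. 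The difference $\theta-\theta^*$ enters only the cross derivative $D_gD_\theta L_\cD$, which vanishes by orthogonality and therefore contributes no constant. So you should redo the constant extraction via the eigenvalue of that rank-one Hessian; otherwise your argument proves the proposition only with a weaker constant.
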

\begin{proof} Under the data model of Equations \ref{eq:data-model-categ-y}, \ref{eq:data-model-categ-t}, the categorical R-loss from Equation \ref{eq:r-loss-categ} satisfies universal Neyman orthogonality (Assumption \ref{eq:Universal_orthogonality}, proof in Appendix \ref{sec:appendix_assumptions_1}), and has continuous and bounded second directional derivatives (Assumption \ref{eq:Boundness}, proof in Appendix \ref{sec:appendix_assumptions_2}) with $\beta=2(1+d M^2)$. Applying Theorem 2 from \citet{foster2023orthogonal} concludes the proof.
\end{proof}
Since for the linear encoding (and for binary treatments from Proposition \ref{prop:orthogonal-learning-binaray}) $d=1$, the rate of convergence for nuisance parameters is less impactful than for more flexible categorical encodings.

\subsection{Causal Time Series Forecasting Models with Deep Learning}
\label{sec:causal-forecasting-models}

To instantiate the theory described in \S\ref{sec:causal-forecasting-theory}, we extend time-series architectures for deep learning, that we use as backbones for models $m, e, \theta$. We then adapt the training procedure to fit Algorithm \ref{alg:orthogonal-learning}. Finally, we change the prediction procedure to output causal forecasts.

{\bf Backbone Time-Series Model Architecture with Orthogonal Learning.}
Our time-series backbone takes as input static features $S$, temporal features $X_{1:t}$, and treatments and outcome values before $\tau$, $T_{1:\tau}$ and $Y_{1:\tau}$. It is then used in three models: $e(\cdot)$, which predicts the treatment sequence $T_{>\tau}$; $m(\cdot)$, which predicts expected outcomes (without knowing the treatments); and $\theta(\cdot)$, which predicts a vector of treatment effects (interpreted differently depending on the encoding, see \S\ref{sec:causal-forecasting-theory}). Our inputs are non-traditional, and the reason why our main backbone is a modified TFT \cite{TFT} (modification details in Appendix \ref{sec:appendix_tft}). We also experiment with the state-of-the-art iTransformer architecture \cite{itransformer}, passing static features' embeddings along temporal ones, and training with randomly sampled $t$ values by truncating the time-series at $t$, as the iTransformer processes entire time-series without a causal structure (i.e., the model ``knows the future''). This results in very slow training and prediction, and we can only apply it to one of our datasets.

{\bf Fitting a Causal Orthogonal Time-Series Model.}
We fit our models following Algorithm \ref{alg:orthogonal-learning}. We train both models $m(W_t)$, $e(W_t)$ as estimators for $\mathbb{E}[Y_t|W_t]$ and $\mathbb{E}[T_t|W_t]$ on subset $S_1$. $m(\cdot)$ is trained with the mean squared error (MSE) loss. The loss for $e(\cdot)$ depends on the encoding function. We use the cross entropy for the one-hot encoding case, the binary cross-entropy for the cumulative encoding, and the MSE for the linear encoding. In all cases, we train $\theta(\cdot)$ by minimizing the R-loss $l(\cdot,m,e,z_t)$ from Eq. \ref{eq:r-loss-categ} with data points $z_t \in S_2$.  

{\bf Forecasting with a Causal Orthogonal Time-Series Model.}
$\theta$ predicts causal changes of outcomes under different treatments.
However, to optimize downstream decisions (e.g. chose prices to maximize revenue) we need a proper forecast for the outcome $Y_t$.
Our final estimator combines all models following Eq. \ref{eq:data-model-categ-y}:
\vspace{-\baselineskip}
\[
\hat{Y}_t(W_t) = m(W_t) + \big(T_t - e(W_t)\big)^T \theta(W_t) .
\]
This way under $g_0$ we have $\hat{Y}_t(W_t) = f_0(W_t) + e_0(W_t)^T \theta(W_t) + \big(T_t - e_0(W_t)\big)^T \theta(W_t) = f_0(W_t) + T_t^T \theta(W_t)$. When $\theta(\cdot)$ is close to the CATE, our predictions change causally with $T_t$.

\section{Regression Discontinuity Design for Causal Model Evaluation}
\label{sec:rdd}
Once our causal forecasting models are fitted, how can we measure their performance? This is a core challenge in causal models, which fundamentally rely on untestable assumptions \cite{pearl2009causality}, such as Assumption \ref{eq:assumption-uncounfoundedness} or related data models as in Equations \ref{eq:data-model-binary-y}-\ref{eq:data-model-categ-t}.
In \S\ref{sec:causal-background}, we used domain knowledge on the sign of the CATE to argue that non-causal models fail to learn valid causal relationships. However, these are qualitative observations that do not enable comparisons between different models.
To the best of our knowledge, their is currently no reliable method to quantitatively evaluate the quality of causal models, without relying on the same assumptions used in fitting the models.

To address this challenge, we take inspiration from robustness checks common in econometrics, which estimate causal effects of interest using multiple approaches. We scale this approach to evaluate ML models. Specifically, we leverage Regression Discontinuity Designs (RDDs) (\S\ref{sec:rdd-background}) and changes in treatments over time to estimate a subset of daily CATE values in a time-series. This approach uses a different assumption than Assumption \ref{eq:assumption-uncounfoundedness} (\S\ref{sec:rdd-estimate-cate}). We then compute CATE values for our test set time-series, and use the resulting causal test to evaluate and compare ML models (\S\ref{sec:rdd-dataset}).

\subsection{Background: Regression Discontinuity Designs}
\label{sec:rdd-background}

Regression Discontinuity Designs (RDD) leverage a continuity assumption to estimate a treatment effect at a cut-off point that triggers a change in treatment \cite{hahn2001identification,imbens2008regression,lee2010regression}.
We present a slightly non-traditional version that conditions on additional variables, needed for our demand forecast setting.
Formally, RDDs require a variable X with an associated cut-off value $c$ that corresponds to a change in treatment. That is, $T=T_1$ when $X < c$, and $T=T_2$ when $X \geq c$. RDDs provide identifyability under the following assumption:
\begin{assumption}[Continuity]\label{eq:assumption-rdd-continuity} The potential outcomes' conditional expectation, $\E[Y(T_2)| X=x, V]$ and $\E[Y(T_1)| X=x, V]$ respectively, are both continuous in $x$.
\end{assumption}

Under assumption~\ref{eq:assumption-rdd-continuity}, we can estimate the CATE at the cutoff $X=c$:
\begin{proposition}[CATE Identifyablity from RDD]\label{prop:rdd-base}
Under Assumption~\ref{eq:assumption-rdd-continuity}, we have that:
\[
\E\big[Y(T_2) - Y(T_1) \mid X = c, V \big] = \lim_{x \to c^+} \E\big[ Y \mid X = x, V\big] - \lim_{x \to c^-} \E\big[ Y \mid X = x, V\big]
\]
\end{proposition}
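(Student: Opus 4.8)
The plan is to combine the sharp (deterministic) assignment rule built into the RDD setup with the continuity Assumption~\ref{eq:assumption-rdd-continuity}. First I would observe that on the event $\{X \geq c\}$ the realized treatment is $T = T_2$, so the consistency relation $Y = \1_{T_1} Y(T_1) + \1_{T_2} Y(T_2)$ collapses to $Y = Y(T_2)$ there; hence for every $x \geq c$, $\E[Y \mid X = x, V] = \E[Y(T_2) \mid X = x, V]$. Symmetrically, on $\{X < c\}$ we have $T = T_1$ and therefore $\E[Y \mid X = x, V] = \E[Y(T_1) \mid X = x, V]$ for every $x < c$.

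Next I would pass to the one-sided limits at $c$. Using the first identity together with continuity of $x \mapsto \E[Y(T_2) \mid X = x, V]$, we get $\lim_{x \to c^+} \E[Y \mid X = x, V] = \lim_{x \to c^+} \E[Y(T_2) \mid X = x, V] = \E[Y(T_2) \mid X = c, V]$. Likewise, continuity of $x \mapsto \E[Y(T_1) \mid X = x, V]$ gives $\lim_{x \to c^-} \E[Y \mid X = x, V] = \E[Y(T_1) \mid X = c, V]$. Subtracting the two and invoking linearity of conditional expectation yields $\E[Y(T_2) \mid X = c, V] - \E[Y(T_1) \mid X = c, V] = \E[Y(T_2) - Y(T_1) \mid X = c, V]$, which is exactly the claimed identity.

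The main obstacle — and the only place where the assumption does real work — is the $T_1$ side at the cutoff: $\E[Y(T_1) \mid X = c, V]$ is a purely counterfactual quantity, since for $X = c$ one only ever observes units under $T_2$. Continuity is precisely what licenses extrapolating the \emph{observable} regression function $x \mapsto \E[Y \mid X = x, V]$ from the region $x < c$ to its limit at $c$ and identifying that limit with the unobserved $\E[Y(T_1) \mid X = c, V]$; without it the left limit carries no causal meaning. A secondary, purely technical caveat is to ensure the conditional expectations and their one-sided limits are well defined — e.g. that $X$ has positive density in a neighborhood of $c$ (conditionally on $V$), or that one works with regular conditional distributions — but this is a standard measure-theoretic hygiene point rather than a substantive difficulty, and I would state it briefly and move on.
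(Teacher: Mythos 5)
Your proof is correct. The paper states Proposition~\ref{prop:rdd-base} without proof, as a background identification result imported from the RDD literature it cites \citep{hahn2001identification,imbens2008regression}; your argument --- sharp assignment plus consistency to equate $\E[Y \mid X=x, V]$ with the relevant potential-outcome regression on each side of $c$, then one-sided limits via Assumption~\ref{eq:assumption-rdd-continuity}, then linearity --- is exactly the standard derivation underlying that result, and your remarks on where continuity does the work and on the need for $X$ to have support near $c$ are apt.
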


Practical estimators based on Proposition \ref{prop:rdd-base} typically fit linear or polynomial models $g: \mathcal{X}, \mathcal{V} \rightarrow \mathcal{Y}$, using an indicator variable for the cut-off to measure the discontinuity at the point of change of treatment $X=c$. This discontinuity captures the CATE at $X=c$. Since we are interested in the limit at $X=c$, one often uses a weight kernel $K(\cdot)$ that decreases the importance of datapoints further from the cut-off $X=c$. Typical weight kernels include the rectangular "window" kernel of width $h$, or a triangular kernel in which weights decay linearly when moving away from $X=c$, until they reach zero outside of $[c-h, c+h]$.

\subsection{Estimating Treatment Effects in Time Series}
\label{sec:rdd-estimate-cate}

RDDs are attractive to evaluate our models, as Assumption \ref{eq:assumption-rdd-continuity} differs from uncounfoundedness (Assumption \ref{eq:assumption-uncounfoundedness}) which underpins the learning of causal models.
To causally evaluate models, we want to compare a model's CATE prediction on a given time-series $n$ at time-step $t$, to another estimate of this CATE used as ground truth.
We obtain this ground truth by framing observed changes in treatment on individual time-series as an RDD.
Consider time-series $n$. We call $t^n_i$ the $i^{th}$ {\em switching time-step} at which a price change happens, with $T_1 \triangleq T_{t^n_i - 1} \neq T_{t^n_i+1} \triangleq T_2$.
Because we use aggregated time-series (e.g., at the daily level), and treatment changes happen in the middle of an observation, $T_{t^n_i}$ is often ill defined. While this deviates from traditional RDD formulations, in which $\E\big[ Y \mid X = c, V\big] = \E\big[ Y(T_2) \mid X = c, V\big]$, Prop. \ref{prop:rdd-base} and associated estimators still apply.

Another deviation from \S\ref{sec:rdd-background} is that our time-steps are discrete, so we technically cannot take the upper- and lower-limits at the $t^n_i$. We follow \citet{lee2008regression}
and assume that specification errors at $t^n_i$ (the deviation from $g$'s estimates and the true conditional expectation on each side of the cut-off) are zero in expectation. Formally:
\begin{assumption}[Unbiased specification errors]\label{eq:assumption-rdd-continuity-with-model} Consider (continuous at $c$) RDD models $g_{T_1}: x \rightarrow g(T_1, x)$ and $g_{T_2}: x \rightarrow g(T_2, x)$ for time-series $n$.
For $i\in \{1, 2\}, \ \E[Y_{t^n_i}(T_i) - g_{T_i}(c)| V]=0$.
\end{assumption}
In demand forecasting, assumptions \ref{eq:assumption-rdd-continuity} and \ref{eq:assumption-rdd-continuity-with-model} with $V=\emptyset$ are unrealistic, as there are known cyclical patterns of demand based on days of the week. In \S\ref{sec:rdd-dataset}, we show how we use $V$ to correct for these effects, making Assumptions \ref{eq:assumption-rdd-continuity} and \ref{eq:assumption-rdd-continuity-with-model} more legitimate.
With these assumptions and estimator, and denoting $w^n_{t^n_i}$ the features we give our orthogonal forecasting model at the cut-off (see \S\ref{sec:causal-forecasting-theory}), and $g^n$ the associated RDD model, we have the following result:
\begin{proposition}[RDD for point CATE]\label{prop:rdd-time-series} Consider time-series $n$ and switching-time $t^n_i$:
\[
\E\big[Y_{t^n_i}(T_{t^n_{i+1}}) - Y_{t^n_i}(T_{t^n_i}) \ | W_{t^n_i}=w^n_{t^n_i}, V \big]
=
\lim_{t \to {t^n_{i}}^{+}} \mathbb{E}\big[ g^n(t) \mid ~V\big] - \lim_{t \to {t^n_{i}}^-} \mathbb{E}\big[ g^n(t) \mid ~V\big] .
\]
\end{proposition}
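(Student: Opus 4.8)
The plan is to instantiate Proposition~\ref{prop:rdd-base} with the running variable taken to be time $t$ and the cut-off $c$ taken to be the switching step $t^n_i$, the whole design run ``inside'' the single series $n$. With this dictionary, for $t$ just below $c$ the realized treatment is $T_1\triangleq T_{t^n_i-1}$ and for $t$ just above $c$ it is $T_2\triangleq T_{t^n_i+1}$; conditioning on $W_{t^n_i}=w^n_{t^n_i}$ simply restricts attention to series $n$ — the event implicitly carried by the superscript $n$ on $g^n$ and in the statement of Assumption~\ref{eq:assumption-rdd-continuity-with-model} — while $V$ plays the role of the extra covariates of \S\ref{sec:rdd-background}. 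Proposition~\ref{prop:rdd-base} then writes the target CATE as the difference of the one-sided limits $\lim_{t\to c^{\pm}}\E[Y_t\mid t,\,W_{t^n_i}=w^n_{t^n_i},\,V]$, and the remaining work is to replace these (empirically inaccessible) limits of conditional means of the observed outcome by the limits $\lim_{t\to c^{\pm}}\E[g^n(t)\mid V]$ of the fitted RDD model.

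First I would verify the hypothesis of Proposition~\ref{prop:rdd-base} in this instantiation. Its continuity requirement (Assumption~\ref{eq:assumption-rdd-continuity}) is that $t\mapsto\E[Y_t(T_1)\mid t,V]$ and $t\mapsto\E[Y_t(T_2)\mid t,V]$ be continuous at $c$, which is exactly the modelling hypothesis made legitimate by the choice of $V$ described in \S\ref{sec:rdd-dataset} (the covariates absorb the known cyclical demand patterns). The two ways this setting departs from textbook RDD — the treatment $T_{t^n_i}$ being ill-defined because observations are time-aggregated, and $t$ being discrete rather than continuous — do not interfere here: the argument behind Proposition~\ref{prop:rdd-base} only uses points with $t>c$ and $t<c$ together with continuity to pass to the limit, never the value of $T_t$ or $Y_t$ exactly at $t=c$, so $\E[Y_{t^n_i}(T_2)-Y_{t^n_i}(T_1)\mid W_{t^n_i}=w^n_{t^n_i},V]=\lim_{t\to c^+}\E[Y_t\mid t,V]-\lim_{t\to c^-}\E[Y_t\mid t,V]$ still holds.

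Second I would rewrite each one-sided limit through $g^n$. On the right branch, for $t>c$ the treatment equals $T_2$, so $\E[Y_t\mid t,V]=\E[Y_t(T_2)\mid t,V]$, and continuity gives $\lim_{t\to c^+}\E[Y_t\mid t,V]=\E[Y_{t^n_i}(T_2)\mid V]$. Assumption~\ref{eq:assumption-rdd-continuity-with-model} applied to the (continuous) right branch $g_{T_2}$ of $g^n$ gives $\E[Y_{t^n_i}(T_2)\mid V]=\E[g_{T_2}(c)\mid V]$, and since $g_{T_2}$ is continuous at $c$ we have $g_{T_2}(c)=\lim_{t\to c^+}g^n(t)$, so exchanging the limit with the expectation yields $\E[g_{T_2}(c)\mid V]=\lim_{t\to c^+}\E[g^n(t)\mid V]$. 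The identical argument on the left branch with $T_1$ and $g_{T_1}$ gives $\lim_{t\to c^-}\E[Y_t\mid t,V]=\lim_{t\to c^-}\E[g^n(t)\mid V]$. Subtracting the two identities and substituting into the display from the first step yields the claim.

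I expect the main obstacle to be conceptual rather than computational: being precise about \emph{what} the RDD identifies given the two deviations above, and keeping the continuity and unbiasedness hypotheses attached to the potential outcomes $Y_t(T_1),Y_t(T_2)$ — which remain well defined even at $t=c$ — rather than to the observed, mixed outcome $Y_{t^n_i}$. The only genuinely analytic point is the interchange of limit and expectation in passing from $\E[g_{T_2}(c)\mid V]$ to $\lim_{t\to c^+}\E[g^n(t)\mid V]$; it is routine for the linear or low-order polynomial models $g^n$ used in practice (continuity of $g^n$ in $t$, plus a trivial domination bound on a neighbourhood of $c$), but it should be recorded, since it is the step that genuinely uses the continuity of the \emph{fitted} model rather than of the true regression function.
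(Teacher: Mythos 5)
Your proposal is correct and follows essentially the same route as the paper's own proof: identify the $W_{t^n_i}$-conditioning with restricting to series $n$ at the cutoff, invoke Assumption~\ref{eq:assumption-rdd-continuity} and Proposition~\ref{prop:rdd-base} with $t$ as the running variable to obtain the one-sided limits of $\E[Y^n\mid t, V]$, and then use Assumption~\ref{eq:assumption-rdd-continuity-with-model} to replace those limits by the corresponding limits of $g^n$. You are in fact somewhat more explicit than the paper on the branch-wise application of the specification-error assumption and on the interchange of limit and expectation, but these are refinements of, not departures from, the same argument.
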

\begin{proof} We start by using the fact that an i.i.d. time-series with a given context $W_t$ is an unbiased (one point) estimate of the conditional expectation over the data, that is: $\E\big[Y_{t^n_i}(T_{t^n_{i+1}}) - Y_{t^n_i}(T_{t^n_i}) \ | W_{t^n_i}=w^n_{t^n_i},  \mid t=t^n_i  \big] = \E\big[Y^n_{t^n_i}(T_{t^n_{i+1}}) - Y^n_{t^n_i}(T_{t^n_i})\big] = \E\big[Y^n_t(T_{t^n_{i+1}}) - Y^n_t(T_{t^n_i}) \mid t=t^n_i \big]$.

Invoking Assumption \ref{eq:assumption-rdd-continuity} and Proposition \ref{prop:rdd-base} yields $\E\big[Y^n_t(T_{t^n_{i+1}}) - Y^n_t(T_{t^n_i}) \mid t=t^n_i \big]
=
\lim_{t \to {t^n_{i}}^{+}} \E\big[Y^n \mid t, V \big] - \lim_{t \to {t^n_{i}}^{-}} \E\big[Y^n \mid t, V \big]$.
Finally, Assumption \ref{eq:assumption-rdd-continuity-with-model} gives:
$\lim_{t \to {t^n_{i}}^{+}} \E\big[Y^n \mid t, V \big] - \lim_{t \to {t^n_{i}}^{-}} \E\big[Y^n \mid t, V \big]
=
\lim_{t \to {t^n_{i}}^{+}} \E\big[g^n(t) \mid V \big] - \lim_{t \to {t^n_{i}}^{-}} \E\big[g^n(t) \mid  V \big]$.
\end{proof}

\subsection{Causal Test Sets to Evaluate Causal Forecasting Models}
\label{sec:rdd-dataset}

Proposition \ref{prop:rdd-time-series} lets us estimate the CATE on a specific time-series $n$, at a specific time $t^n_i$, for a specific treatment change $T_{t^n_i - 1} \rightarrow T_{t^n_i+1}$. We call this quantity $\text{CATE}_{t^n_i}$. To create our causal test set, we take the in-distribution task test set $S_{\textrm{test}}$, and collect all CATE values that we can estimate from it in $S^{\textrm{CATE}}_{\textrm{test}}$.
We only include switch-times $t^n_i$ that have at least three time-steps without treatment changes strictly before and after $t^n_i$, to have enough data for the RDD estimator.

Given enough data, we create a small dataset $D_i^n$ to fit our RDD model $g$, including all time-steps with constant treatments around $t^n_i$. Formally, $D_i^n = \{t: \forall t' \in [t, t^n_i-1], \ T_{t'} = T_{t^n_i-1} \} \cup \{t: \forall t' \in [t^n_i+1, t], \ T_{t'} = T_{t^n_i+1} \}$ (note that $t^n_i \not\in D_i^n$).
In tasks with known cyclical patterns, such as our demand prediction task with day of the week effects (see \S\ref{sec:rdd-estimate-cate}), we create a set of comparable time-series $S^n$ (using a rule informed by domain knowledge) on which we fit a linear regression model with time interactions $Y \sim V + V \times t$. For days of the week, this gives us parameters $\alpha_{1,j}, \alpha_{2, j}$ for $j\in \{1, \ldots, 7\}$. We then compute the residuals $\Tilde{Y}_t = Y_t - \sum_{j=1}^7(\alpha_{1,j} + t \alpha_{2,j})$, on which we fit our RDD model.
From early experiments, we noticed that $L_2$ regularized linear models perform best for $g$.
We fit such a model with weight kernel $K(.)$ of window size $h$ on dataset $D_i^n$ using the following specification:
\[
\tilde{Y}_t = \beta_0 + \beta_1 (t-t^n_i) + \beta_2 \1_{t > t^n_i} + \beta_3 \1_{t > t^n_i} (t-t^n_i) + \epsilon
\]
Parameter $\beta_2$ corresponds to the estimate of $\text{CATE}_{t^n_i}$ that we put in our causal test set $S^{\textrm{CATE}}_{\textrm{test}}$.
Figure \ref{fig:rdd} represents the estimation of different $\text{CATE}_{t^n_i}$ for a given time series, with and without correction.

\begin{figure}[t]
\centering
\begin{subfigure}{.49\textwidth}
  \centering
  \includegraphics[scale=0.29]{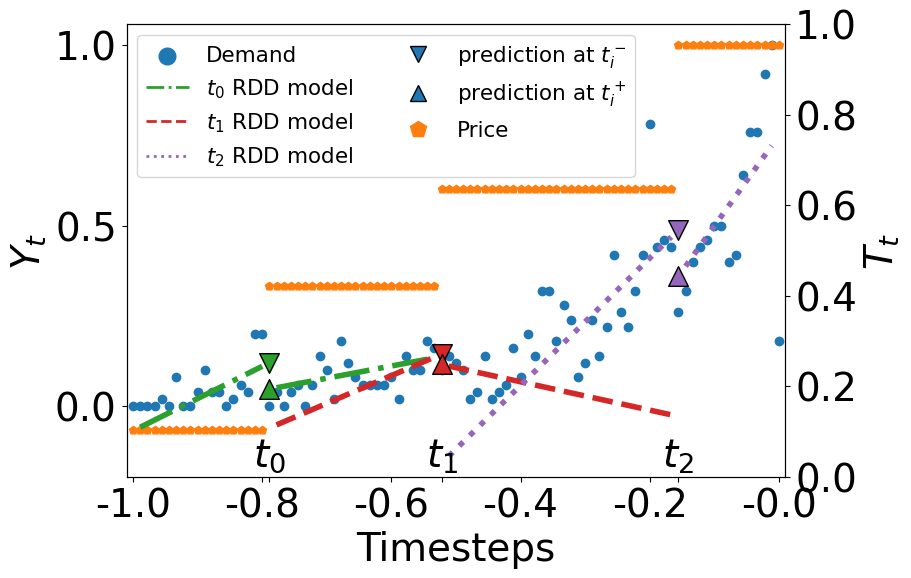}
  \caption{RDDs without weekday correction}
  \label{fig:rdd_linear}
\end{subfigure}%
\hfill
\begin{subfigure}{.49\textwidth}
  \centering
  \includegraphics[scale=0.29]{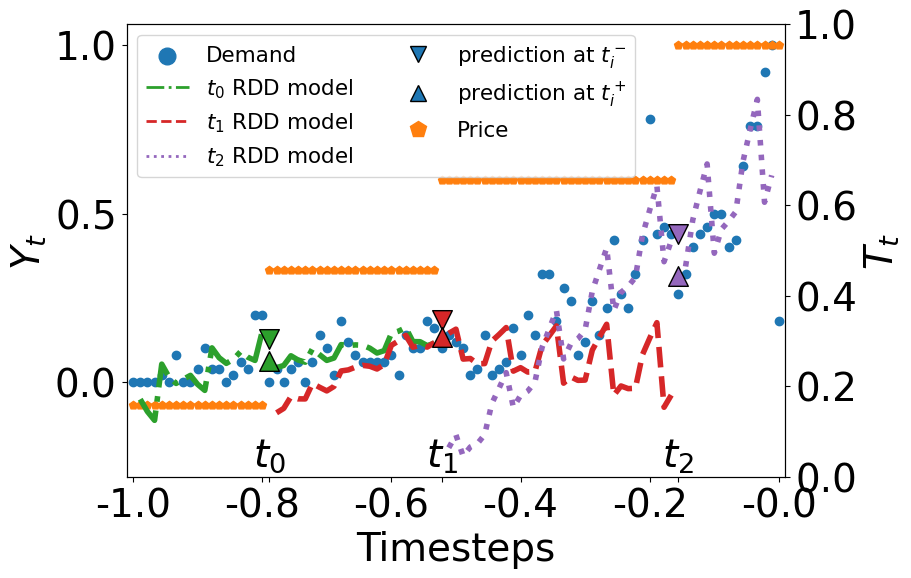}
  \caption{RDDs using weekday correction}
  \label{fig:rdd_weekday}
\end{subfigure}
\vspace{-0.5em}
\caption{CATE estimates (distance between the two triangles) at three $t^n_i$ values on a train time series $n$, without weekday correction (a), and with correction (b). In this example, the RDD framework effectively captures the decline in demand under price increases. The fit is better after correction, so CATE estimates are likely more accurate. Prices and demand are scaled to $[0, 1]$. }
\vspace{-1em}
\label{fig:rdd}
\end{figure}

Small sample sizes in RDD datasets make the final causal test set $S^{\textrm{CATE}}_{\textrm{test}}$ noisy. We filter outliers by cutting the 2.5\% tails of the distribution on each side. With this test set of causal effect, we can evaluate predictions of causal effects from our models, as described in \S\ref{sec:causal-forecasting-models}. We use traditional metrics such as the root-mean-square error (RMSE) or the mean absolute error (MAE). We can compare such metrics to causal effects estimated with non-causal models, by predicting with those models at different treatments (prices) and subtracting the results to estimate their predicted CATE.

\section{Evaluation}
\label{sec:eval}

\subsection{Demand Forecasting Dataset}
\label{sec:eval-wm}
{\bf Dataset.} Each time-series is a sequence of prices and number of sales for one train. Temporal features include the weekday of the timestep and the number of days before departure. Stationary features include the departure and arrival terminals and departure date time. The dataset consists in about 300,000 training time-series, and validation and test sets of around 100,000 each. All the figures and metrics regarding this dataset have been anonymized. Time steps $t$ are normalized to $[-1,0]$ (all time series have the same length).

\textbf{Experimental setup.} We set $\tau=0.33$, and forecast all outcomes $(Y_t)_{t>\tau}$. Baseline (non-causal) models are trained with the MSE loss, and include a well-tuned LGBM model (used in production) and two deep learning architectures: the state-of-the-art iTransformer \cite{itransformer}, and a custom version of the Temporal Fusion Transformer (TFT) \cite{TFT}. The TFT maps well to our requirements (see \S\ref{sec:causal-forecasting}), and we modify it for performance (details in Appendix \ref{sec:appendix_tft}).
We tune all models' hyper-parameters separately and equally, and report metrics by training and testing with 5 random seeds on final hyper-parameters, and reporting the mean and standard-deviation.
The RDD dataset uses a linear weights kernel with $h = 14$ time steps.
Keeping all switching times with at least 3 data points strictly before and after the price change retains 70\% of observed price changes.

\textbf{Results.} Table \ref{tab:WM_table} shows the performance of several models. We can see that non-causal models perform best in-distribution, and the LGBM is best among them in both RMSE and MAE. This is expected, as the observed prices are very informative in-distribution, since operators use their experience to set prices based on the demand they expect. On causal tasks however, causal models outperform the baselines, with linear models being particularly accurate in terms of RDD RMSE, which is expected in high teatment dimension $d$ (\S\ref{sec:causal-forecasting-theory}).
The Causal iTransformer with linear activation has an RDD RMSE 37\% better than the baseline TFT, the next best non-causal model, a significant improvement.
Figure \ref{fig:eval_WM_train} corresponds to the same time series as figure \ref{fig:motivation-train-example-LGBM}-\ref{fig:motivation-train-example-TFT}, and shows that the causal model captures the correct CATE sign (increasing prices decreases the demand forecast).
Figure \ref{fig:eval_WM_histogram} shows that the causal TFT's CATE distribution is qualitatively much better: model predicts fewer positive CATE effects than baselines on the test set, and those positive CATEs are smaller in magnitude.

\begin{table}[t]
\begin{center}
\vspace{0.5em}
\resizebox{14cm}{!}{
\begin{tabular}{lllll}

\multicolumn{1}{c}{\bf Models}  &\multicolumn{1}{c}{\bf RMSE} &\multicolumn{1}{c}{\bf MAE} &\multicolumn{1}{c}{\bf RDD RMSE } 
&\multicolumn{1}{c}{\bf RDD MAE }
\\ \hline 
LGBM         &\bf 0.6870 $\pm$ 0.0015  &\bf 0.2877 $\pm$ 0.0005 & 0.2163 $\pm$ 0.0009 & 0.1390 $\pm$ 0.0004\\
TFT Baseline         &0.7044 $\pm$ 0.0050  & 0.3023  $\pm$ 0.0053 & 0.1910 $\pm$ 0.0014 & 0.1262 $\pm$ 0.0005\\
\hline
Causal TFT Linear         &0.7216 $\pm$ 0.0020  & 0.3104 $\pm$ 0.0047 & {\bf 0.1217 $\pm$ 0.0019} & 0.1271 $\pm$ 0.0008\\
Causal TFT Cumulative       &0.7189 $\pm$ 0.0091  & 0.3067 $\pm$ 0.0019 & 0.1820 $\pm$ 0.0008 & \bf 0.1243 $\pm$ 0.0004\\
Causal TFT One-hot        &0.7236 $\pm$ 0.0115  & 0.3095 $\pm$ 0.0069 & 0.1874 $\pm$ 0.0007 & 0.1279 $\pm$ 0.0002\\
\hline
Causal iTransformer Linear         &0.7504 $\pm$ 0.0098  & 0.3113 $\pm$ 0.0038 & \bf 0.1197 $\pm$ 0.0014 & 0.1259 $\pm$ 0.0003\\
Causal iTransformer Cumulative         &0.7377 $\pm$ 0.0070  & 0.3058 $\pm$ 0.0017 & 0.1908 $\pm$ 0.0021 & 0.1282 $\pm$ 0.0009\\
Causal iTransformer One-hot             &0.7347 $\pm$ 0.0097  & 0.3036 $\pm$ 0.0030 & 0.1930 $\pm$ 0.0023 & 0.1288 $\pm$ 0.0009\\
\end{tabular}
}
\caption{Baselines perform better in distribution (RMSE, MAE), while causal models better capture causal effects (RDD metrics).}
\label{tab:WM_table}
\end{center}
\end{table}

\begin{figure}[t]
\centering
\begin{subfigure}{.4\textwidth}
  \centering
  \includegraphics[scale=0.375]{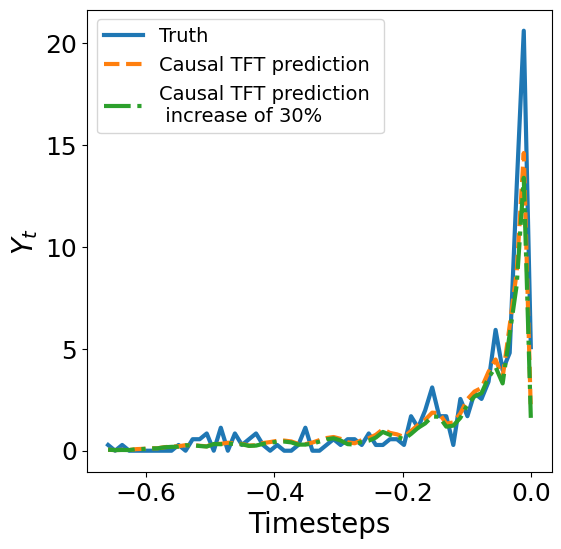}
  \caption{Causal TFT}
  \label{fig:eval_WM_train}
\end{subfigure}%
\hfill
\begin{subfigure}{.4\textwidth}
  \centering
  \includegraphics[scale=0.375]{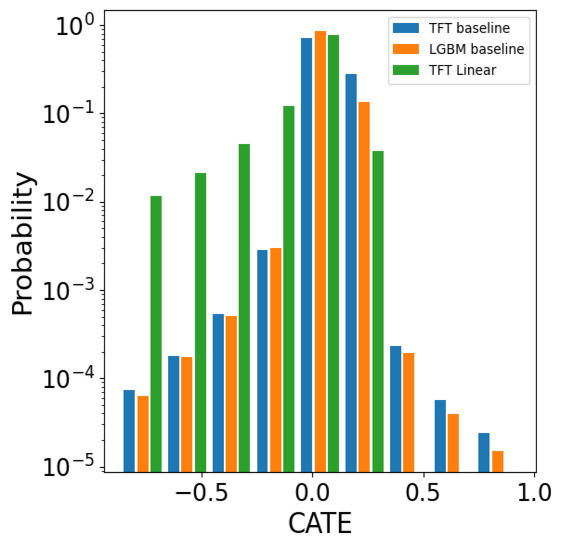}
  \caption{Treatment effects distribution}
  \label{fig:eval_WM_histogram}
\end{subfigure}
\vspace{-0.5em}
\caption{(a) Daily demand forecast change under price increase for one train, causal TFT. Analogous to Fig. \ref{fig:motivation-train-example-LGBM}-\ref{fig:motivation-train-example-TFT}. (b): CATE distribution over the test set, for a change from the observed price to the next possible price, normalized by the price change. Analogous to Fig. \ref{fig:motivation-elasticity}.}
\label{fig:Causal_WM_train}
\vspace{-1em}
\end{figure}

\subsection{MIMIC Health Prediction Dataset}
\label{sec:eval-mimic}
We also evaluate our approach on a public dataset, and release the code to replicate experiments here: \url{https://github.com/wiremind/causal-forecasting}.

\textbf{Dataset.} We use MIMIC-extract library \cite{MIMIC-extract} to process MIMIC-III \cite{MIMIC-III}.
Every time-series represents data from one patient who stayed between 30 and 60 hours in a critical care unit, a typical setting in causal inference benchmarks \cite{adversarial_loss,causal_forecasting,causal_transformers}. Static features include the patient's gender, age, and ethnicity. Temporal features include 25 vital signals.
Our task is to forecast a patient's blood pressure, and we aim at estimating the treatment effect of mechanical ventilation and vasopressor drugs.
Since this represents a categorical treatment, our linear encoding does not apply.

\textbf{Experimental setup.} We keep the default variable length time-series, and follow prior art in forecasting blood pressure for the next 5 hours \citep{causal_forecasting,causal_transformers}. That is, for each $t$ we set $\tau = t-1$ and forecast $[t, t+4]$.
We use the same baseline losses as in \S\ref{sec:eval-wm}.
We were not able to fit our causal models with the iTransformer backbone, which requires a fixed $\tau$, so we focus on our causal TFT.
We also consider a state-of-the-art causal forecasting baseline, the Causal Transformer \cite{causal_transformers}. We slightly modify the code to plug information leakage from the future vitals of hour $t=\tau+1$ (see Appendix \ref{sec:appendix_causal_transformer} for details).
We evaluate forecasts in distribution as prior work, as well as using our RDD methodology to create a causal test set. Our RDDs use a rectangular weight kernel with $h = 5$.
Retaining switching times with at least three points strictly on each side also preserves about 70\% of the treatment changes.

\textbf{Results.} We focus on RMSE results. MAE results are in Appendix \ref{appendix:mimic} and are qualitatively the same.
When forecasting in distribution, Table \ref{tab:MIMIC_table_RMSE} (up) shows that baseline models outperform causal ones.
The auto-regressive Causal Transformer outperforms our TFT in next-step prediction, but is less accurate when predicting further into the future, which is consistant with prior observations \cite{itransformer,Tide,autoformer,SAMFormer}. 
Once again, baseline models are able to exploit the correlation between treatments and future outcomes, as doctors use their domain knowledge to anticipate ant prevent issues.

On the causal task using our RDD-based CATE estimates though, we can see on Table \ref{tab:MIMIC_table_RMSE} (down) that the one-hot causal model outperforms the baseline TFT by at least 1.6\% in all time-steps, and the Causal Transformer by almost 1\% in the first time step.
Contrary to our Causal TFT, the Causal Transformer has access to the treatment sequence between $\tau$ and $t$. Since our RDD methodology only estimates daily treatment effects, the sequence of treatments $\tau:t$ is always in distribution, and the Causal Transformer is able to exploit this signal. However, this means that Causal Transformer CATE predictions for changes in {\em treatment sequences} would be confounded, and suffer some the same issues as the TFT baseline.
Finding similar estimators to our RDD approach for treatment sequences to rigorously evaluate these effects is an interesting avenue for future work.

\begin{table}[t]
\begin{center}
\vspace{0.5em}
\resizebox{14cm}{!}{
\begin{tabular}{llllll}
\multicolumn{1}{c}{\bf Models}  &\multicolumn{1}{c}{ $t$ = $\tau$+1} &\multicolumn{1}{c}{ $t$ = $\tau$+2} &\multicolumn{1}{c}{ $t$ = $\tau$+3 } &\multicolumn{1}{c}{ $t$ = $\tau$+4 } &\multicolumn{1}{c}{ $t$ = $\tau$+5 }
\\ \hline
\multicolumn{6}{c}{In distribution RMSE}
\\ \hline
Causal Transformer         &\bf 8.823 $\pm$ 0.028  & \bf 9.499 $\pm$ 0.024 & 9.845 $\pm$ 0.035 & 10.098 $\pm$ 0.028 & 10.313 $\pm$ 0.026\\
TFT Baseline         &8.921 $\pm$ 0.056  & 9.534 $\pm$ 0.057 & \bf 9.834 $\pm$ 0.061 & \bf 10.051 $\pm$ 0.052 & \bf 10.238 $\pm$ 0.043\\
Causal TFT One-hot        &8.951 $\pm$ 0.023  & 9.596 $\pm$ 0.025 & 9.921 $\pm$ 0.038 & 10.165 $\pm$ 0.028 & 10.371 $\pm$ 0.027\\
Causal TFT Cumulative        &8.945 $\pm$ 0.026  & 9.608 $\pm$ 0.033 & 9.935 $\pm$ 0.046 & 10.184 $\pm$ 0.035 & 10.390 $\pm$ 0.030
\\ \hline
\multicolumn{6}{c}{RDD RMSE for CATE prediction}
\\ \hline
Causal Transformer         & 2.887 $\pm$  0.145  & 2.869 $\pm$ 0.145 & \bf 2.844 $\pm$ 0.141& \bf 2.833 $\pm$ 0.140 & \bf 2.833 $\pm$ 0.129\\
TFT Baseline         &2.908 $\pm$ 0.102 &2.942 $\pm$ 0.103 & 2.969 $\pm$ 0.114& 3.011 $\pm$ 0.104 & 3.051 $\pm$ 0.096\\
Causal TFT One-hot        &  \bf 2.861 $\pm$ 0.094  & \bf 2.861 $\pm$ 0.094 & 2.861 $\pm$ 0.094 & 2.862 $\pm$ 0.094 & 2.862 $\pm$ 0.094\\
Causal TFT Cumulative        & 2.895 $\pm$ 0.050  & 2.895 $\pm$ 0.050 & 2.895 $\pm$ 0.050& 2.895 $\pm$ 0.050 & 2.895 $\pm$ 0.050\\
\end{tabular}
}
\caption{Performance of next five steps forecasts, in distribution and for causal effects (RDD).}
\label{tab:MIMIC_table_RMSE}
\end{center}
\vspace{-1em}
\end{table}


\bibliography{iclr2025_conference}
\bibliographystyle{iclr2025_conference}

\appendix
\section{Encoding functions}
\label{sec:encoding_functions}
In this section, we expand on the three different encoding functions that we plug into the R-Loss.

\subsection{Linear encoding}\label{sec:scalar-encoding}

A possible encoding function is to pass the treatment as a scalar value in the R-Loss, giving a linear approximation of $\E[Y_t(T) | W_t]$ with regards to $T$. In this framework, we denote $T_t$ as the scalar value of the treatment. Hence the data model reduces to that of Eq. \ref{eq:data-model-binary-y}, but with a continuous $T$.

We can access the CATE between $T$ and $T'$ as:
\begin{equation*}
    \mathbb{E}[Y(T_t=T,W_t) - Y(T_t=T',W_t)|W_t] = \theta_0(W_t)(T-T')
\end{equation*}
This encoding performs best on the the railway dataset, in which treatments are fairly high dimensional and, conditioned on $W_t$, we only observe local variation of the treatment around a typical price.
Figure \ref{fig:theta_reg} is an example of the estimated values of $\theta_0$ for a random time-series. We notice that the effect of prices has little impact at the beginning of the forecast. The effect of price over sales increases as the departure date gets closer. Moreover, we observe that the price has little effect at time step $t=0$ (the day at which the train leaves leaves).

\begin{figure}[ht]
    \centering
    \includegraphics[width=0.5\linewidth]{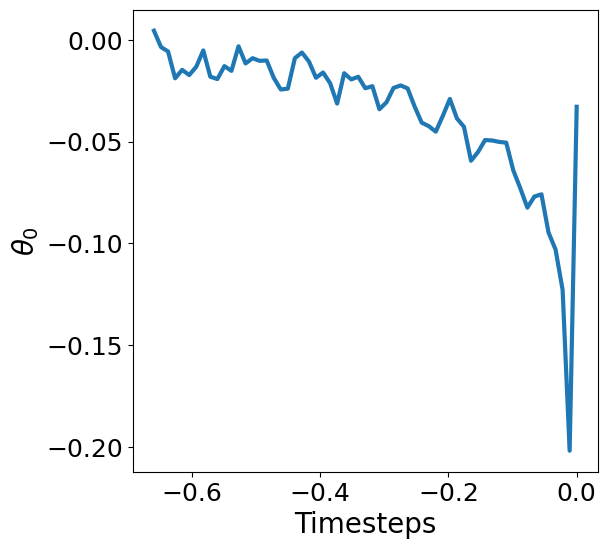}
    \caption{$\theta_0$ values for various time steps for a railway time serie}
    \label{fig:theta_reg}
\end{figure}

\subsection{One-hot encoding}
A more intuitive encoding under our categorical treatments is the one-hot encoding. In this case $e_0(W_t)_i = P(T_t=i|W_t)$. With $d$ possible treatments, the vector $\theta_0(W_t)$ is:
\begin{equation}\label{eq:meaning_theta_one_hot}
    \forall i \in [1:k], \theta_0(W_t)_i = \mathbb{E}[Y_t(T_t=i,W_t) - f_0(W_t)|W_t]
\end{equation}
In order to estimate the CATE between $T$ and $T$, we compute the difference between $\theta_0(W_t)_{T}$ and $\theta_0(W_t)_{T'}$:
\begin{equation*}
    \mathbb{E}[Y(T_t=T,W_t) - Y(T_t=T',W_t)|W_t] = \theta_0(W_t)_{T} - \theta_0(W_t)_{T'}
\end{equation*}

Thanks to equation \ref{eq:meaning_theta_one_hot} and to domain knowledge on the railway dataset, we can make the assumption that the values of $\theta_0(W_t)_i$ must decrease when $i$ increases. Figure \ref{fig:theta_one_hot} shows an example of $\theta$'s vector values for a specific time-series and time step. We notice that the values seem to decrease as the treatment increases. Nonetheless, we observe an increase for small treatment values. It is likely caused by the fact that those treatments are extremely rare in our dataset (conditioned on $W_t$).
\begin{figure}[ht]
    \centering
    \includegraphics[width=0.5\linewidth]{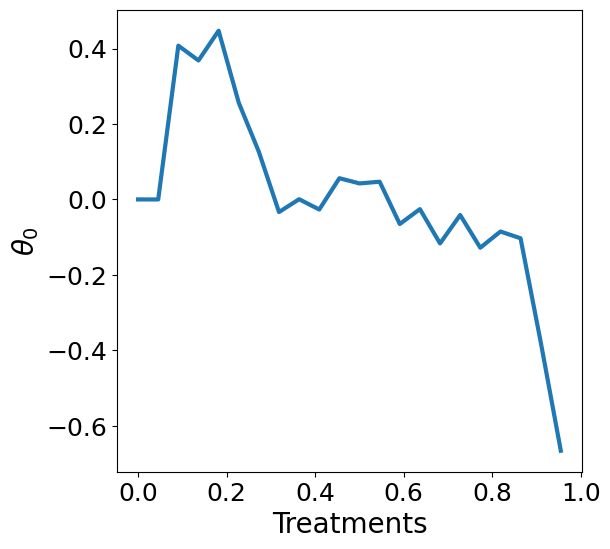}
    \caption{Example of $\theta$ vector values for the one-hot encoding on the passenger rail dataset.}
    \label{fig:theta_one_hot}
\end{figure}

\subsection{Cumulative encoding}

\begin{figure}[ht]
    \centering
    \includegraphics[width=0.5\linewidth]{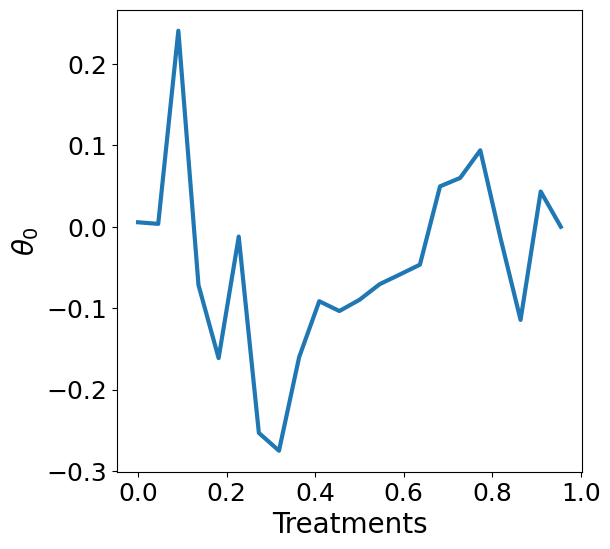}
    \caption{Example of $\theta_0$ vector values for the cumulative encoding }
    \label{fig:theta_cumul}
\end{figure}

The cumulative encoding function $x\mapsto \mathbb{1}_{T_t\ge x}$ is another possible encoding function of the treatment in our data model in Eq. \ref{eq:data-model-categ-y}. The interpretation of the vector $\theta_0(W_t)$ is then:
\begin{equation*}\label{eq:meaning_theta_cumul}
    \forall i \in [2:k], \theta_0(W_t)_i = \mathbb{E}[Y_t(T_t=i,W_t) - Y_t(T_t=i-1,W_t)|W_t] 
\end{equation*}
Thus the CATE between treatment $T$ and $T'$, with $T'<T$ equals:
\begin{equation*}
    \mathbb{E}[Y(T_t=T,W_t) - Y(T_t=T',W_t)|W_t] = \sum_{i=T'+1}^{T} \theta_0(W_t)_i
\end{equation*}
As done in the last subsection, we also can infer some characteristics of the theta vector values thanks to domain knowledge and equation \ref{eq:meaning_theta_cumul}. In that case, the values of $\theta_0$ should be negative for all indexes. Figure \ref{fig:theta_cumul} shows an example of $\theta(W_t)$. Most of them are indeed negative. Nonetheless, some values associated to small treatment values, and a few very large treatment values, are still positive. Again, this is likely caused by those treatments being rarely seen during training (conditioned on $W_t$).

\section{Detailed Theory and Proofs}
\label{sec:appendix_theory}

\subsection{Stating and Verifying Orthogonal Learning Assumptions}
\label{sec:appendix_assumptions}
Proposition \ref{prop:orthogonal-learning-categ} relies on two assumptions that must be verified by the R-loss to ensure convergence through Theorem 2 in \citet{foster2023orthogonal}.
These two assumptions are \textbf{Universal Orthogonality} and \textbf{Boundness}.
Denote the observations $Z^n = (Y_{1:t}^n,T_{1:t}^n, X_{1:t}^n,S^n)$, sampled from an unknown distribution $\mathcal{D}_t$ depending on $t$. We denote $\theta^* \in \arg\min_{\theta \in \Theta} L_{\mathcal{D}_t}(\theta,m,e)$, and $star(\Theta,\theta)$ the star domain of $\Theta$ at point $\theta$.
Both assumptions require the definition of a \textit{Directional Derivative}:
\begin{definition}[Directional Derivative] Let $F : \mathcal{F} \rightarrow \mathbb{R}$ be a function from a vector space $\mathcal{F}$, and define the derivative operator at point $f$ with respect to vector $g$ as:
\[
D_fF(f)[g] = \frac{d}{dt}F(f+tg)_{|t=0}
\]
\end{definition}

With this definition, we can state the two assumptions required of the (population level) R-loss (we omit the indexing on $t$ in $\mathcal{D}_t$, as the same assumption must old in each $t$ but our definition is identical at each $t$):
\begin{assumption}[Universal Orthogonality] For all $\hat{\theta} \in star(\hat{\Theta},\theta^*) + star(\hat{\Theta}-\theta^*)$  :
\[
D_gD_\theta L_D(\hat{\theta}, m,e)[\theta - \theta ^*, g-g_0] = 0
\]
\label{eq:Universal_orthogonality}
\end{assumption}
\begin{assumption}[Boundness] $D_gD_gL_D$ and $D_\theta D_\theta D_g L_D$ are both continuous and there is a constant $\beta$ such as $\forall \theta \in star(\hat{\Theta},\theta^*)$ and $\forall \Tilde{g} \in star(\mathcal{G},g_0)$:
\[
|D_gD_g L_D(\theta, \Tilde{g})[g - g_0, g-g_0]| \leq \beta ||g-g_0||_{\mathcal{G}}^2
\]
\label{eq:Boundness}
\end{assumption}

\subsection{Universal Orthogonality}\label{sec:appendix_assumptions_1}
We denote $W_t=(S,X_{1:t},Y_{1:\tau})$. We omit the indices $t$ for $T_t$, $Y_t$ and $W_t$.  We note $g_0 = (m_0,e_0)$.
Next, we prove the orthogonality of the R-Loss.
\begin{align*}
    &D_\theta L_D(\Bar{\theta},g_0)[\theta - \theta^\star]  \\ 
    &=-2\mathbb{E}[(T-e_0(W)^T \big(\theta(W) - \theta^\star(W)\big) \times \big(Y-m_0(W)-(T-e_0(W))^T\Bar{\theta}(W)\big)]
\end{align*}

We first consider the directional derive with respect to $e$:
\begin{align*}
    &D_eD_\theta L_D(\Bar{\theta},g_0)[\theta - \theta^\star,e-e_0] = \\ 
    &2\mathbb{E}[\big(Y-m_0(W) -(T-e_0(W))^T \Bar{\theta}(W)\big) \times \big(e(W)-e_0(W)\big)^T \big(\theta(W) - \theta^\star(W)\big)] \\
    &- 2\mathbb{E}[\big(e(W)-e_0(W)\big)^T \Bar{\theta}(W) \times \big(T-e_0(W)\big)^T \big(\theta(W)-\theta^\star(W)\big)]
\end{align*}
We notice that in the first term of this derivative, we have $\forall w\in \mathcal{W}$:
\begin{gather*}
    \mathbb{E}[((Y-m_0(W) - (T-e_0(W))^T\Bar{\theta}(W)|W=w] = \mathbb{E}[\epsilon_1 +\epsilon_2^T \big(\theta_0(W) - \Bar{\theta}(W)\big)|W=w] = 0
\end{gather*}
Regarding the second term, when developing the dot products we also have that $\forall w\in \mathcal{W}$:
\begin{align*}
    &\mathbb{E}[(e(W)-e_0(W))^T \Bar{\theta}(W) \times (T-e_0(W))^T (\theta(W) - \theta^\star(W))|W=w] \\
    &= \sum_{i=1}^k \sum_{j=1}^k \mathbb{E}[\epsilon_{2j} |W=w] \times \Bar{\theta}(w)_j\times (e(w)-e_0(w))_i \times (\theta(w) - \theta_0(w))_j \\
    &= 0
\end{align*}

We now need to establish the orthogonality for the parameter $m$ of the population risk:
\begin{align*}
    &D_m D_\theta L_D(\Bar{\theta},g_0)[\theta - \theta^\star,m-m_0]  \\
    &= 2 \mathbb{E}[(T-e_0(W))^T (\theta(W)-\theta^\star(W)) \times (m(W)-m_0(W))] \\
    &= 2\mathbb{E}[\epsilon_2^T (\theta(W)-\theta^\star(W)) \times (m(W)-m_0(W))] \\
    &= 0
\end{align*}
with $E[\epsilon_2|Z] = 0_d$.

Hence, the population level version of the R-loss from Eq. \ref{eq:r-loss-categ} verifies assumption \ref{eq:Universal_orthogonality}.

\subsection{Boundness}\label{sec:appendix_assumptions_2}

The second directional derivatives of the population risk are continuous as it is the composition the square function and a linear function, both with continuous gradients. 
We now prove the boundedness of the second derivative of the population risk with regard to the nuisance parameters, for the vector case. 

Let us denote $l(\gamma, \zeta,z) = [Y- \gamma_1 - (T-\gamma_2)^T \zeta]^2$ with $\gamma_1 \in \mathbb{R}$ and $(\gamma_2,\zeta) \in (\mathbb{R^d})^2$
We have:

\begin{gather*}
\nabla_{\gamma \gamma}l(\gamma,\zeta,z) = 2
    \begin{bNiceArray}{c|ccc}[margin]
    1 & -\zeta_1 & \Cdots &  -\zeta_d \\
    \hline
    -\zeta_1 & \Block{3-3}{(\zeta_i\zeta_j)_{(i,j)\in [1,d]^2}} \\
    \Vdots & & & \\
    -\zeta_d & & & \\
\end{bNiceArray}
\end{gather*}
This matrix rank is 1 as every line $i$ is equal to the first line multiplied by $-\zeta_i$. Moreover the unique non-zero eigenvalue of this matrix is $2*(1+\sum_{i=1}^d \zeta_i^2)$. Its eigenvector is $(1,-\zeta_1,...,-\zeta_d)$.

Denote $M$ the bound such as $\forall W, \forall i, |\theta(W)_i|<M$. Then when $\zeta_i = \theta(W)_i$, we have that $2*(1+\sum_{i=1}^d \zeta_i^2) \leq 2(1+d M^2)$.
The directional derivative of the R-loss function in Eq. \ref{eq:r-loss-categ} at $g-g_0$ is the derivative of the composition of the functions \ref{eq:r-loss-categ} and $\Tilde{g}+t(g-g_0)$, with respect to $t$. Hence, when taking two derivatives with respect to $t$, the equation become:
\begin{align*}
    ||D_gD_g R(\theta,\Tilde{g},z)[g-g_0,g-g_0] || = || (g-g_0)^T \nabla_{\gamma \gamma}l(\gamma,\zeta,z) (g-g_0)||
\end{align*}
as the hessian of $t \mapsto t(g-g_0)$ equals 0.   

Using the fact that the largest eigenvalue of $\nabla_{\gamma \gamma}l(\gamma,\zeta,z)$ is less than $2(1+d M^2)$, we have $||D_gD_g R(\theta,\Tilde{g})[g-g_0,g-g_0] || < 2(1+d M^2) ||g-g_0||_{\mathcal{G}}^2$. Finally, we compute the expectation on this inequality and we apply the dominated convergence theorem to get the population risk (requiring that $||g-g_0||_{L_2} < \infty$).
To conclude, we get the following inequality:
\begin{gather*}
    ||D_gD_g L_D(\theta,\Tilde{g})[g-g_0,g-g_0] || < 2(1+d\times M^2)||g-g_0||_{L_2}^2
\end{gather*}
Thus, assumption \ref{eq:Boundness} is verified with $\beta = 2(1+d M^2)$.

\section{Implementation details}
\label{sec:implementation_details}
\subsection{Causal Transformer}
\label{sec:appendix_causal_transformer}
We use the Causal Transformer \citep{causal_transformers} in order to get a reference model for comparison. We use the public github repository (\url{https://github.com/Valentyn1997/CausalTransformer/tree/main}). \citet{causal_transformers} train an auto regressive model, like us using 25 vital signals, the age, the gender and the ethnicity to estimate the blood pressure of patient.
In that context, the relevant features for predicting blood pressure at time $t$ are all the vitals before time $t$, static features and all treatments up until time $t$. Nonetheless, when evaluating for time $\tau+l$ for $l$ in $[1,5]$, the model had knowledge of vitals at time $\tau+1$, enabling the model to perform very well for $l=1$ and leaking information for all $\tau$ (\url{https://github.com/Valentyn1997/CausalTransformer/blob/c49a24faa57af966501e241f57a26b528f874a53/src/data/mimic_iii/real_dataset.py#L67}). This leak manifested as extremely good performance for $l=1$, followed by a large drop of performances at $l \geq 2$. In our evaluation, we fix the leak from the future, and obtain results consistent with other models trained on this dataset during our study.
\subsection{RDD}
\label{sec:appendix_rdd}
Algorithm \ref{alg:rdd_dataset} describes how to compute the dataset of RDD values in full details.

\begin{algorithm}[H]
        \caption{RDD Algorithm}
    \begin{algorithmic}

    \State \textbf{Input:} $N$ time series $(Y_t, T_t, W_t)$
    \State
    \State Initialize $S^{\text{CATE}}_{\text{test}} \gets \{\}$

    \State

    \For{$n$ \textbf{from} $1$ \textbf{to} $N$}
        \State Initialize $k \gets 1$
        \For{$t$ \textbf{from} $1$ \textbf{to} $\text{length}(Y^n) - 1$}
            \If{$T_t \neq T_{t+1}$}
                \State $t^n_k \gets t$
                \State $k \gets k+1$
            \EndIf
        \EndFor

        \If{$k > 2$}

            \For{$i$ \textbf{from} $1$ \textbf{to} $k-1$}

                \If{$(t^n_i - t^n_{i+1}) \geq 2 \ \textbf{and} \ ((i = 1 \ \textbf{and} \ t^n_i \geq 1) \ \textbf{or} \ (t^n_{i} - t^n_{i-1}) \geq 2)$}
                \State $D_i^n \gets \left\{t^n_{i-1}+1, t^n_{i-1}+2,..., t^n_{i+1}-1\right\} \backslash \left\{ t^n_i \right\}$
                \For{$t$ \textbf{in} $D_i^n$}

                    \State $\Tilde{X}_t \gets \left(t-t^n_i,  \1_\mathrm{t > t^n_i}, \1_\mathrm{t > t^n_i} \times (t-t^n_i)  \right)$
                 \EndFor
                 
                \State $f \gets \text{LinearModel}.fit((\Tilde{X}_t)_{t \in D_i^n}, (Y_t)_{t \in D_i^n}, (K(t))_{t \in D_i^n})$
                \State Append $f.\text{coefficients}[2]$ to $S^{\text{CATE}}_{\text{test}}$ \Comment{Extract $\beta_2$ from the model $f$}            
            \EndIf
            
            \EndFor
        \EndIf
   
    \EndFor
    
    \State \textbf{Return} $S^{\text{CATE}}_{\text{test}}$
    \end{algorithmic}
    \label{alg:rdd_dataset}

\end{algorithm}

\subsection{TFT}
\label{sec:appendix_tft}
We developed a custom version of the Temporal Fusion Transformer \cite{TFT}. We focused on this architecture as it supports static features, past and futures features as input, and temporally causal processing of the input, where most architectures focus on predicting future time step from the previous ones \cite{itransformer},\cite{zhang2023crossformer},\cite{autoformer},\cite{PatchTST}.
We apply several changes to the original TFT:
\begin{itemize}
    \item We remove the encoder of the TFT and replace it by a series of encoder blocks. Figure \ref{fig:encoder_tft} presents the architecture of our encoder block. Each encoder block is made of self-attention and residual blocks. Each block divides the length of the input by 2. Hence, at the beginning of the encoder, the model processes high frequency features, and the deeper the encoder is the lower frequency the model is considering.
    \item We use conditional mechanisms that show interesting performances on Image Generation. Static features go through a dense model to create an embedding $s$ gathering all the static information. We then apply successive Cross Attention mechanisms between the token $s$ and the tokens of the temporal features \cite{rombach2022highresolutionimagesynthesislatent} to mix temporal and static features. Finally, we apply adaptive group normalization \cite{dhariwal2021diffusionmodelsbeatgans} where $s$ is the class embedding.
    \item We replace the masked self-attention decoder by a new decoder block. Figure \ref{fig:decoder_tft} shows the layers used in the decoder block. This block is almost the symmetric of the encoder one. We remove the cross-attention layer in the decoder as it increases the expressivity of the model. We also add skip connections between the encoder the decoder blocks as done in \cite{ronneberger2015unetconvolutionalnetworksbiomedical}. Thus, the information of high frequency features are directly transferred to the decoder without passing thought the entire encoder in order to help the expressivity of the model.
    \item We use causal convolution layers to prevent any possible leak of information from future features $X_{t+i}$ to impact $Y_t$ prediction. We also apply masking in our self-attention layers as done in \cite{li2020enhancinglocalitybreakingmemory}.
\end{itemize}

\begin{figure}[H]
\centering
\begin{subfigure}{.49\textwidth}
  \centering
  \includegraphics[scale=0.3]{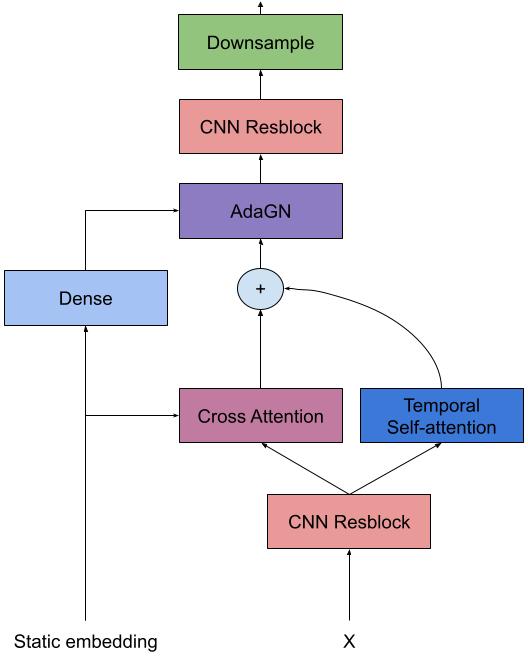}
  \caption{Encoder block}
  \label{fig:encoder_tft}
\end{subfigure}%
\hfill
\begin{subfigure}{.49\textwidth}
  \centering
  \includegraphics[scale=0.3]{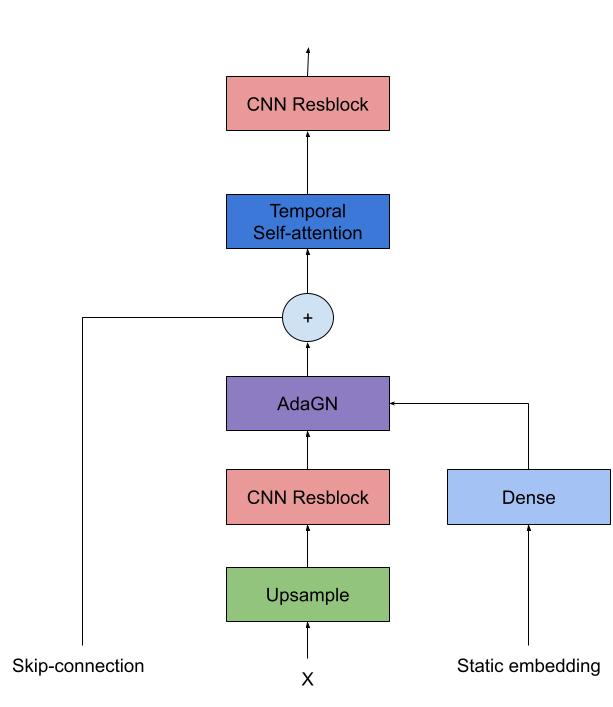}
  \caption{Decoder block}
  \label{fig:decoder_tft}
\end{subfigure}
\vspace{0.5em}
\caption{Our modified TFT architecture.}
\label{fig:Causal_TFT_arch}
\end{figure}

\section{Additional results}
\label{appendix:additional-results}

\subsection{Addition MIMIC-II results}
\label{appendix:mimic}
Table \ref{tab:MIMIC_table_MAE} shows MAE results for in-distribution forecasts. Table \ref{tab:MIMIC_CATE_MAE} shows MAE results for causal effects using our RDD dataset. Results show the same effects as those described in \S\ref{sec:eval-mimic} for the RMSE and RDD RMSE.

\begin{table}[H]
\begin{center}
\vspace{0.5em}
\resizebox{14cm}{!}{
\begin{tabular}{llllll}
\multicolumn{1}{c}{\bf Models}  &\multicolumn{1}{c}{ $t$ = $\tau$+1} &\multicolumn{1}{c}{ $t$ = $\tau$+2} &\multicolumn{1}{c}{ $t$ = $\tau$+3 } &\multicolumn{1}{c}{ $t$ = $\tau$+4 } &\multicolumn{1}{c}{ $t$ = $\tau$+5 }
\\ \hline \\
Causal Transformer         & \bf 6.028 $\pm$ 0.027& \bf 6.703 $\pm$ 0.027 & 7.048 $\pm$ 0.025 & 7.290 $\pm$ 0.032 & 7.485 $\pm$ 0.040\\
TFT Baseline         &6.162 $\pm$ 0.044  & 6.745 $\pm$ 0.024 & \bf 7.032 $\pm$ 0.018 & \bf 7.233 $\pm$ 0.022 & \bf 7.402 $\pm$ 0.024\\
Causal TFT One-hot        &6.170 $\pm$ 0.030  & 6.802 $\pm$ 0.033 & 7.115 $\pm$ 0.033 & 7.337 $\pm$ 0.036 & 7.515 $\pm$ 0.033\\
Causal TFT Cumulative        &6.147 $\pm$ 0.045  & 6.799 $\pm$ 0.054 & 7.125 $\pm$ 0.044 & 7.359 $\pm$ 0.05 & 7.544 $\pm$ 0.051\\
\end{tabular}
}
\caption{MAE with respect to the ground truth time series, best values are in \bf bold}
\label{tab:MIMIC_table_MAE}
\end{center}
\vspace{0.5em}
\end{table}

\begin{table}[H]
\begin{center}
\vspace{0.5em}
\resizebox{14cm}{!}{
\begin{tabular}{llllll}
\multicolumn{1}{c}{\bf Models}  &\multicolumn{1}{c}{ $t$ = $\tau$+1} &\multicolumn{1}{c}{ $t$ = $\tau$+2} &\multicolumn{1}{c}{ $t$ = $\tau$+3 } &\multicolumn{1}{c}{ $t$ = $\tau$+4 } &\multicolumn{1}{c}{ $t$ = $\tau$+5 }
\\ \hline \\
Causal Transformer         & 2.286 $\pm$  0.096  &  2.271 $\pm$ 0.097 & \bf 2.249 $\pm$ 0.096& \bf 2.241 $\pm$ 0.094 & \bf 2.242 $\pm$ 0.085\\
TFT Baseline         & 2.301 $\pm$ 0.065  & 2.320 $\pm$ 0.067 & 2.337 $\pm$ 0.071 & 2.368 $\pm$ 0.065 & 2.400 $\pm$ 0.058\\
Causal TFT One-hot        & \bf 2.265 $\pm$ 0.066  & \bf 2.265 $\pm$ 0.066 & 2.265 $\pm$ 0.066 & 2.265 $\pm$ 0.066 & 2.265 $\pm$ 0.066\\
Causal TFT Cumulative        &  2.292 $\pm$ 0.044  & 2.292 $\pm$ 0.044 & 2.292 $\pm$ 0.044 & 2.292 $\pm$ 0.044 &  2.292 $\pm$ 0.044\\
\end{tabular}
}
\caption{RDD MAE w.r.t the CATE, best values are in \bf bold}
\label{tab:MIMIC_CATE_MAE}
\end{center}
\vspace{0.5em}
\end{table}

\section{Extended Related Work}
\label{sec:relwork}
Our two background sections \S\ref{sec:causal-background} and \S\ref{sec:rdd-background} already discuss the closest related work. In this section, we expand on this discussion and discuss other related papers.

{\bf Time-series forecasting} has seen a flurry of recent activity. The main task studied in the literature is that of multivariate time series forecasting, with the goal of forecasting $(Y^n_i)_{i \in [t:t+h]}$ from $(Y^n_i)_{i \in [0:t]}$. Another common task is to forecast $(Y^n_i)_{i \in [1:t]}$ from $(X^n_i)_{i \in [1:t]}$. Various approaches have been proposed, though not clear winner has emerged yet. TSMixer \cite{TSMixer} relies only on fully connected linear layers to process entire time-series feature-wise, before mixing features at each time-step. The iTransformer \cite{itransformer} architecture similarly leverages fully connected layers, but processes intermediate representations with transformer layers channel wise. PatchTST \cite{PatchTST} gathers patches time-steps and features that it treats as tokens in a self-attention mechanism, before reconstructing a time-series shaped output. In all these architectures and others \cite{SAMFormer,autoformer}, there is no insurance against future information leaking into previous time-steps. While this can be fine in some cases, this is not compatible with our specific task.
Indeed, our causal models predict $Y^n_{t+l}$ based on past features $(Y^n_i)_{i \in [0:t]}$, static features $S^n$ and temporal features $(X^n_i)_{i \in [0:t+l]}$. A key point in our task is to ensure causality in the forecasting, preventing any leak of information from $(X^n_i)_{i \in [t+1:t+h]}$. Supporting this requirement is not common in recent time series papers, and the reason why we build on the TFT \cite{TFT}, which does provide an adequate structure. 

{\bf Causal time-series models} have seen less progress, though some time-series models aim to estimate the CATE.
\citet{timeseriesdeconfounderestimating,causal_forecasting} rely on the same meta-architecture, including a network computing the treatment propensity score and a network computing the outcome (though \cite{timeseriesdeconfounderestimating} tries to learn unobserved confounders, a challenging task). Both models use LSTM or RNN architectures merged with new mechanisms: a propensity network learns to predict propensity scores, which are then used to train the outcome network with the inverse probability of treatment weighting. This loss, mainly used for policy learning \cite{foster2023orthogonal}, can be high variance and does not full corrects for regularization bias. This is why we decided to train our model with the Residualized Loss (R-Loss) \cite{foster2023orthogonal,Nie_Wager}, which also gives us a more flexible approach for treatment encoding.

\citet{Brodersen_2015} take a different approach, and use structured time-series models to explicitly model counterfactuals under no-treatment as a synthetic control. This approach is well suited to binary treatments with several observations, to be able to model outcomes without treatment. It is less applicable to our case with high dimensional treatments and complex conditioning variables. However, it could interesting to explore this approach as a multi-time-steps alternative to our RDD causal test set.

With the recent rise of the transformer architecture for sequential models, new techniques have been developed to merge transformers with causal inference \cite{causal_transformers}. The authors leverage an adversarial loss \cite{adversarial_loss} to encourage the model to learn balanced features for the treated and untreated populations and learn the causal link between the treatment and the outcome.
\citet{causal_transformers} use a transformer backbone with two small dense layers on top trained with this adversarial loss.
This identification approach differs from classical causal inference frameworks, and it is unclear that it converges to a well defined conditional causal effect.

{\bf Regression Discontinuity Designs (RDDs)} are studied theoretically, and frequently used empirically, in economics \cite{hahn2001identification,imbens2008regression,lee2010regression}.
Researchers typically use RDDs to estimate one specific ATE. In contrast, we estimate many ATEs (one for each time series), and combine them with time-series features $W_t$ to create a CATE dataset. To the best of our knowledge, this is a novel appraoch to evaluate causal models.
Researchers have used RDDs to estimate ATEs in time-series before though. \citet{hausman2018regression} surveys typical approaches and pitfalls, and proposes empirical checks to perform on the data to verify if RDDs are applicable. In our setting, we typically do not have enough data around switching times to perform such checks, though it would be interesting to take inspiration from common RDD checks to improve our causal test sets. We leave this direction for future work.

\end{document}